\theoremstyle{plain}
\newtheorem{theorem}{Theorem}[section]
\newtheorem{claim}[theorem]{Claim}
\newtheorem{lemma}[theorem]{Lemma}
\theoremstyle{definition}
\newtheorem{definition}[theorem]{Definition}
\theoremstyle{remark}
\useunder{\uline}{\ul}{}
\icmltitlerunning{HousE: Knowledge Graph Embedding with Householder Parameterization}
\begin{document}

\twocolumn[
\icmltitle{HousE: Knowledge Graph Embedding with Householder Parameterization}



\icmlsetsymbol{equal}{*}
\icmlsetsymbol{in}{\dag}

\begin{icmlauthorlist}
\icmlauthor{Rui Li}{dut,equal}
\icmlauthor{Jianan Zhao}{ndu}
\icmlauthor{Chaozhuo Li}{comp}
\icmlauthor{Di He}{pku}
\icmlauthor{Yiqi Wang}{msu}
\icmlauthor{Yuming Liu}{ms} \\
\icmlauthor{Hao Sun}{ms}
\icmlauthor{Senzhang Wang}{csu}
\icmlauthor{Weiwei Deng}{ms}
\icmlauthor{Yanming Shen}{dut}
\icmlauthor{Xing Xie}{comp}
\icmlauthor{Qi Zhang}{ms}
\end{icmlauthorlist}

\icmlaffiliation{dut}{Department of Computer Science and Technology, Dalian University of Technology, Dalian, China}
\icmlaffiliation{comp}{Microsoft Research Asia, Beijing, China}
\icmlaffiliation{ndu}{University of Notre Dame, Indiana, USA}
\icmlaffiliation{pku}{Peking University, Beijing, China}
\icmlaffiliation{msu}{Michigan State University, Michigan, USA}
\icmlaffiliation{csu}{Central South University, Changsha, China}
\icmlaffiliation{ms}{Microsoft, Beijing, China}

\icmlcorrespondingauthor{Yanming Shen}{shen@dlut.edu.cn}
\icmlcorrespondingauthor{Chaozhuo Li}{cli@microsoft.com}

\icmlkeywords{Machine Learning, ICML}

\vskip 0.3in
]



\printAffiliationsAndNotice{\icmlEqualContribution}
\begin{abstract} 
The effectiveness of knowledge graph embedding (KGE) largely depends on the ability to model intrinsic relation patterns and mapping properties. 
However, existing approaches can only capture some of them with insufficient modeling capacity. 
In this work, we propose a more powerful KGE framework named HousE, which involves a novel parameterization based
on two kinds of Householder transformations: (1) \normalem{\emph{Householder rotations}} to achieve superior capacity of modeling relation patterns; (2) \normalem{\emph{Householder projections}} to handle sophisticated relation mapping properties. 
Theoretically, HousE is capable of modeling crucial relation patterns and mapping properties simultaneously. Besides, HousE is a generalization of existing rotation-based models while extending the rotations to high-dimensional spaces. Empirically, HousE achieves new state-of-the-art performance on five benchmark datasets. Our code is available at \url{https://github.com/anrep/HousE}. 
\end{abstract}

\section{Introduction}
\label{submission}

Knowledge graphs (KGs) store massive human knowledge as a collection of factual triples, where each triple $(h,r,t)$ represents a relation $r$ between head entity $h$ and tail entity $t$. 
With a wealth of human knowledge, KGs have demonstrated their effectiveness in a myriad of downstream applications \cite{xiong2017explicit}.
However, real-world KGs such as Freebase \cite{bollacker2008freebase} and Yago \cite{suchanek2007yago}) usually suffer from incompleteness. 
Knowledge Graph Embedding (KGE), which learns low-dimensional representations for entities and relations, excels as an effective tool for predicting missing links.
A crucial challenge of KGE lies in 
how to model relation patterns (e.g., symmetry, antisymmetry, inversion and composition) and relation mapping properties (RMPs, i.e., 1-to-1, 1-to-N, N-to-1 and N-to-N)~\cite{TransE, sun2019rotate}  as shown in Figure \ref{Fig-1}. 
Most works design specific vector spaces and operations to capture such patterns and RMPs. For example, 
TransE \cite{TransE} represents relations as translations, which fails in modeling symmetry and RMPs. 
Recently, RotatE \cite{sun2019rotate} represents relations as rotations in the complex plane to model the four relation patterns, but it is incapable of handling RMPs due to the distance-preserving property of rotations.
Rotate3D \cite{gao2020rotate3d} and QuatE \cite{zhang2019quaternion} introduce quaternions to extend rotations to 3-dimensional and 4-dimensional spaces, and achieve
better performance with larger model capacity. 


\begin{figure}[t!]
\centering
\subfigure[Symmetry]{\includegraphics[width=0.3\columnwidth]{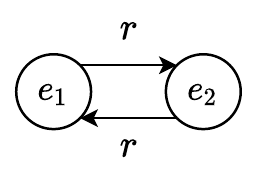}}
\subfigure[Antisymmetry]{\includegraphics[width=0.3\columnwidth]{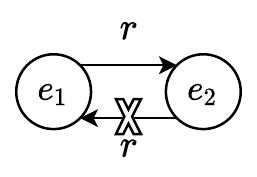}}
\subfigure[Inversion]{\includegraphics[width=0.3\columnwidth]{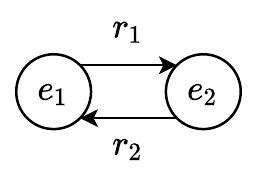}}
\subfigure[Composition]{\includegraphics[width=0.3\columnwidth]{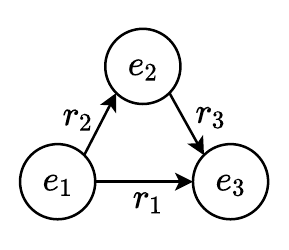}}
\subfigure[1-to-N]{\includegraphics[width=0.3\columnwidth]{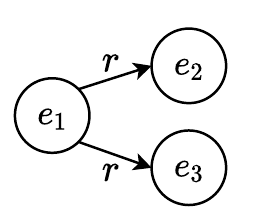}}
\subfigure[N-to-1]{\includegraphics[width=0.3\columnwidth]{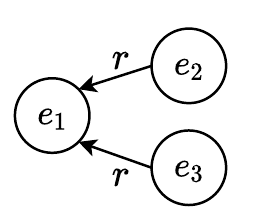}}
\vspace{-2mm}
\caption{Illustrations of four relation patterns (a-d)~\cite{sun2019rotate} and two challenging RMPs (e-f)~\cite{TransE}.}
\label{Fig-1}
\vskip -0.2in
\end{figure}

\begin{table*}[t]
\caption{Recent models' capability of modeling relation patterns and relation mapping properties (RMPs). TransX represents many TransE’s variants, such as TransH~\cite{wang2014knowledge}, TransR~\cite{lin2015learning} and TransD~\cite{ji2015knowledge}.}
\label{ability}
\begin{center}
\begin{small}
\vskip 0.05in
\resizebox{0.75\textwidth}{!}{
\begin{tabular}{lccccccc}
\toprule
Model & Symmetry & Antisymmetry & Inversion & Composition & RMPs & \makecell[c]{Dim. of Rotation} \\
\midrule
TransE    & \XSolidBrush & \CheckmarkBold & \CheckmarkBold & \CheckmarkBold &  \XSolidBrush & - \\
TransX    & \CheckmarkBold & \CheckmarkBold &  \XSolidBrush &  \XSolidBrush  & \CheckmarkBold & - \\
DistMult    & \CheckmarkBold & \XSolidBrush & \XSolidBrush & \XSolidBrush  & \CheckmarkBold & - \\
ComplEx    & \CheckmarkBold & \CheckmarkBold & \CheckmarkBold & \XSolidBrush  & \CheckmarkBold & - \\
RotatE    & \CheckmarkBold & \CheckmarkBold & \CheckmarkBold & \CheckmarkBold  & \XSolidBrush & 2-D \\
Rotate3D    & \CheckmarkBold & \CheckmarkBold & \CheckmarkBold & \CheckmarkBold  & \XSolidBrush & 3-D \\
QuatE    & \CheckmarkBold & \CheckmarkBold & \CheckmarkBold & \XSolidBrush  & \CheckmarkBold & 4-D \\
DualE    & \CheckmarkBold & \CheckmarkBold & \CheckmarkBold & \XSolidBrush  & \CheckmarkBold & 3-D \\
\midrule
HousE-r    & \CheckmarkBold & \CheckmarkBold & \CheckmarkBold & \CheckmarkBold  & \XSolidBrush & $k$-D \\
HousE    & \CheckmarkBold & \CheckmarkBold & \CheckmarkBold & \CheckmarkBold  & \CheckmarkBold & $k$-D \\
\bottomrule
\end{tabular}}
\end{small}
\end{center}
\vspace{-4mm}
\end{table*}

However, as far as we know, none of the existing methods is capable of modeling all the relation patterns and RMPs as shown in Table \ref{ability}, leading to the sub-optimal performance. 
Furthermore, some advanced approaches, such as \cite{sun2019rotate,gao2020rotate3d,zhang2019quaternion}, are specifically designed on 2,3,4 dimensional spaces, which may be inadequate to capture the sophisticated structures of KGs  \cite{zhang2019quaternion}. 
Therefore, this brings us a question: \normalem{\emph{is there a framework to handle all the above relation patterns and RMPs with more powerful modeling capacity?}}

In this paper, we give an affirmative answer by proposing a more powerful and general framework named HousE based on Householder parameterization. 
We prove that the composition of $2\lfloor \frac{k}{2} \rfloor$ Householder reflections \cite{householder1958unitary} can represent any $k$-dimensional rotations. 
This unique property of Householder reflections provides us a natural way to model high-dimensional rotations with more degree of freedom.
We call this kind of rotations as \normalem{\emph{Householder rotations}}, based on which a simple model named HousE-r is proposed to achieve superior capacity of modeling relation patterns.
Nevertheless, HousE-r is plagued by the sophisticated RMPs due to the distance-preserving nature of pure Householder rotations.
To remedy this deficiency, we modify the vanilla Householder reflections to \normalem{\emph{Householder projections}}, which can flexibly adjust the relative distances between points. 
The Householder projections are further integrated with HousE-r to establish the final HousE.
By enjoying the merits of Householder rotations and Householder projections,
HousE is theoretically capable of modeling all the relation patterns and RMPs shown in Table \ref{ability}.   
Moreover, our proposal is a general framework and existing rotation-based models can be viewed as the special cases of HousE.  
Our contributions are summarized as follows:
\begin{itemize}
\item To the best of our knowledge, we are the first to utilize Householder parameterization  to build a more powerful and general KGE framework named HousE. 
\item We present a simple way to represent relations as high-dimensional Householder rotations, which endows HousE with better modeling capacity.

\item We further modify the vanilla Householder reflections to Householder projections. By combining the Householder projections and rotations, HousE is able to model all the relation patterns and RMPs in Table \ref{ability}.

\item We conduct extensive experiments over five benchmarks and our proposal consistently outperforms SOTA baselines over all the datasets. 
\end{itemize}

\section{Problem Setup}



Given the entity set $\mathcal{E}$ and relation set $\mathcal{R}$, a knowledge graph can be formally defined as a collection of factual triples $\mathcal{D}=\{(h,r,t)\}$, in which head/tail entities $h,t\in \mathcal{E}$ and relation $r \in \mathcal{R}$. 
To predict missing links, KGE maps entities and relations to distributed representations, and defines a score function to measure the plausibility of each triple.

Following a series of previous works~\cite{TransE, sun2019rotate, gao2020rotate3d, Rot-Pro}, we define the score function as a distance function $d_r(h,t)$. The distance of the positive triple $(h,r,t) \in \mathcal{D}$ is expected to be smaller than the corrupted negative triples $(h',r,t)$ or $(h,r,t')$, which can be generated by randomly replacing the entity $h$ or $t$ with other entities. 

In the training process, the self-adversarial negative sampling~\cite{sun2019rotate} is used to optimize the models in a contrastive way. 
Given a positive triple and its negative samples, the loss function is defined as follows: 
\begin{equation}
\begin{split}
L = &- \log{\sigma(\gamma-d_r(h,t))}\\
&- \sum_{i=1}^l p(h'_i,r,t'_i)\log{\sigma(d_r(h'_i,t'_i)-\gamma)}\\
&+ \frac{\lambda}{\lvert \mathcal{E} \rvert} \sum_{e \in \mathcal{E}} \Vert e \Vert_2^2,
\end{split}
\end{equation}
where $\gamma$ is a pre-defined margin, $\sigma$ is the sigmoid function, $l$ denotes the number of negative samples,  $(h'_i,r,t'_i)$ is a negative sample against $(h,r,t)$, $p(h'_i,r,t'_i)$ is the weight of negative sampling defined in \cite{sun2019rotate}, $\lambda$ is the regularization coefficient. For the sake of clarification, notations used in this paper are listed in Appendix \ref{notations}.

\begin{figure*}[t!]
\centering
\subfigure[]{\label{Fig-2a}\includegraphics[width=0.45\columnwidth]{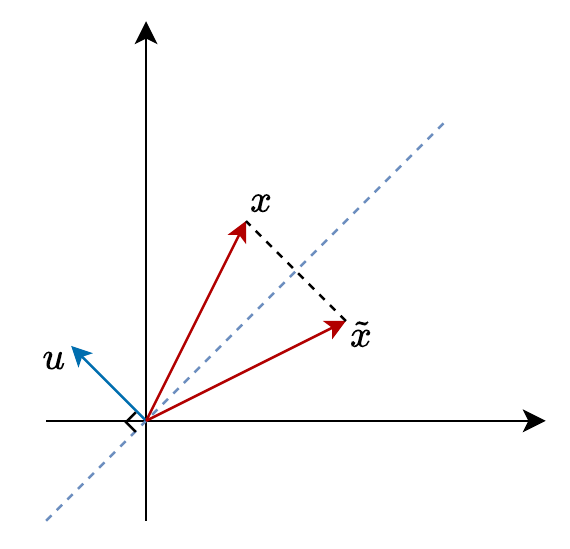}}
\quad
\subfigure[]{\label{Fig-2c}\includegraphics[width=0.45\columnwidth]{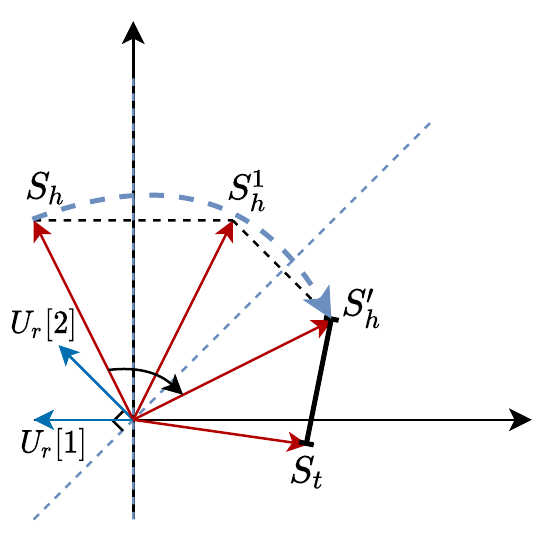}}
\quad
\subfigure[]{\label{Fig-2b}\includegraphics[width=0.45\columnwidth]{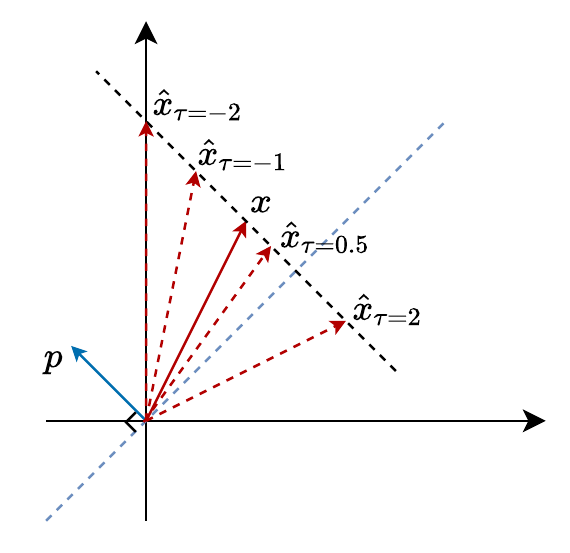}}
\quad
\subfigure[]{\label{Fig-2d}\includegraphics[width=0.45\columnwidth]{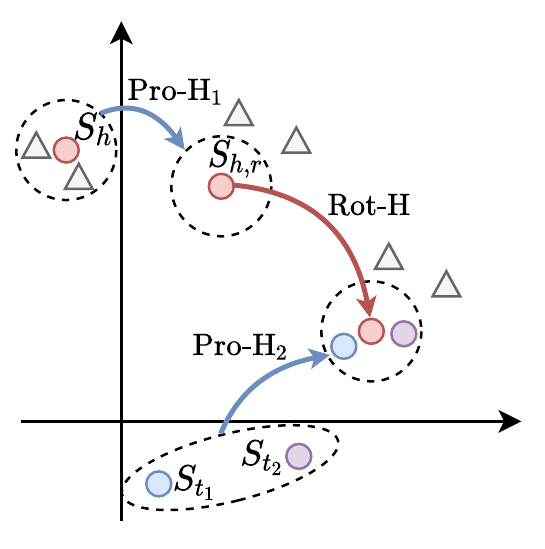}}
\vspace{-4mm}
\caption{(a) Householder reflection in 2-dimensional space; (b) HousE-r models relation $r$ as a 2-dimensional Householder rotation composed of two Householder reflections; (c) Modified Householder reflection in 2-dimensional space with different values of $\tau$; (d) To model $(h,r,t_1)$ and $(h,r,t_2)$, HousE first utilizes relational Householder projections ${\rm Pro\mbox{-}H}_1$ and ${\rm Pro\mbox{-}H}_2$ (blue lines) to change the relative distance between entities, such as increasing the distance between $S_h$ and the negative samples (Marked by triangles) and decreasing the distance between two positive tail entities $S_{t_1}$ and $S_{t_2}$, then HousE performs a relational Householder rotation ${\rm Rot\mbox{-}H}$ (red lines) from projected head embedding $S_{h,r}$ to the projected tail embedding. Note that we omit the row (element) indices $[i]$ for simplicity.}
\label{Fig-2}
\vskip -0.14in
\end{figure*}

\section{Methodology}
\subsection{HousE-r: Relational Householder Rotations}
\label{subsec: rotation} 
In the first step, we seek to develop a general framework to model relations as rotations in the space of any dimension $k$, going beyond \cite{sun2019rotate,gao2020rotate3d,zhang2019quaternion}, for better modeling capacity. 
In order to parameterize a $k$-dimensional rotation matrix, a straight-forward strategy is to randomly initialize a matrix and restrict it to a rotation matrix after every gradient descent update. 
However, such a naive approach may lead to the complicated optimization process and cannot fully cover the set of all $k \times k$ rotation matrices. 
In this paper, we theoretically prove that any $k$-dimensional rotations can be represented as $2\lfloor \frac{k}{2} \rfloor$ Householder reflections \cite{householder1958unitary}. 
Inspired by this theorem, we design an elegant parameterization based on Householder reflections to model $k$-dimensional rotations without any special optimizing procedure.












As the basic mathematical operator in this work, 
\normalem{\emph{Householder matrix}~\cite{householder1958unitary}} represents the reflection (\normalem{\emph{Householder reflection}}) about a hyperplane containing the origin.
Given a unit vector $u\in\mathbb{R}^{k}$, the $k \times k$ Householder matrix $H$, taking  $u$ as variable, is defined as $H(u)$:
\begin{equation}
H(u) = I - 2uu^\top, \label{House}
\end{equation}
where $\Vert u \Vert_2^2 = 1$ and $I$ is the $k\times k$ identity matrix. 
Geometrically, as shown in Figure \ref{Fig-2a}, the Householder matrix transforms $x$ to $\widetilde{x}$ by a reflection about the hyperplane orthogonal to the normal vector $u$:  
\begin{equation}
\begin{split}
\label{ref vector operation}
\widetilde{x} =H(u)x =x-2\braket{x,u}u, 
\end{split}
\end{equation}
where $\braket{\cdot}$ denotes the dot product.

Based on the Householder matrices, we can design a mapping to represent rotations.
Specifically, given a series of unit vectors $U =  \{u_c\}_{c=1}^{2n}$ where $u_c\in \mathbb{R}^k$ and $n$ is a positive integer, we define the mapping as follows:
\begin{equation}
\begin{split}
\label{map-Hrot}
    {\rm Rot\mbox{-}H}(U) = \prod_{c=1}^{2n}H(u_c).
\end{split}
\end{equation}
The output of ${\rm Rot\mbox{-}H}$ is a $k\times k$ orthogonal matrix with determinant $1$, i.e., a rotation matrix ~\cite{artin2016geometric}, since each Householder matrix $H(u_c)$ is orthogonal and its determinant is $-1$.
Moreover, we also prove that any rotation can be expressed as the composition of Householder reflections.

Formally, we have the following theorem:  
\begin{theorem}
When $n=\lfloor \frac{k}{2} \rfloor$, the image of ${\rm Rot\mbox{-}H}$ is the set of all $k\times k$ rotation matrices, i.e., ${\rm Image( Rot\mbox{-}H)}=\boldsymbol{\rm SO}(k)$, $\boldsymbol{\rm SO}(k)$ is the $k$-dimensional  special orthogonal group. (See proof in Appendix \ref{proof})
\label{thm-1}
\end{theorem}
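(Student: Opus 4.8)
The plan is to prove the two inclusions $\mathrm{Image}({\rm Rot\mbox{-}H}) \subseteq \mathbf{SO}(k)$ and $\mathbf{SO}(k) \subseteq \mathrm{Image}({\rm Rot\mbox{-}H})$ separately. The first is essentially already observed in the text: each $H(u_c) = I - 2u_c u_c^\top$ is symmetric and satisfies $H(u_c)^\top H(u_c) = I$ (using $\|u_c\|_2^2 = 1$), hence is orthogonal, and since it has eigenvalue $-1$ with eigenvector $u_c$ and eigenvalue $+1$ of multiplicity $k-1$, its determinant is $-1$. Therefore any product of $2n$ such matrices is orthogonal with determinant $(-1)^{2n} = 1$, i.e.\ lies in $\mathbf{SO}(k)$. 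This direction is routine.

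The substance is the reverse inclusion: every $R \in \mathbf{SO}(k)$ is a product of exactly $2\lfloor k/2\rfloor$ Householder reflections. I would obtain this from the Cartan--Dieudonn\'e theorem, that every element of $O(k)$ is a product of at most $k$ hyperplane reflections; since a reflection across a hyperplane through the origin is precisely some $H(u)$, this is exactly a statement about Householder matrices. I would either cite it or reprove it by induction on $k$: given $R \neq I$ and a nonzero vector $v$ with $Rv \neq v$, set $w = v - Rv$ and let $S = H(w/\|w\|_2)$; a short computation using $\|Rv\|_2 = \|v\|_2$ gives $\langle Rv, w\rangle = -\tfrac12 \|w\|_2^2$ and hence $SRv = v$. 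Thus $SR$ fixes $v$, restricts to an orthogonal map on the $(k-1)$-dimensional subspace $v^\perp$, and the inductive hypothesis writes that restriction as a product of at most $k-1$ reflections of $v^\perp$, each extending to a Householder reflection of $\mathbb{R}^k$ that fixes $v$. Peeling off $S$ then expresses $R$ as a product of at most $k$ reflections.

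The final and most delicate step is matching the count to exactly $2\lfloor k/2\rfloor$. Here I would use a parity argument: if $R = \prod_i S_i$ is a product of $m$ Householder reflections, then $\det R = (-1)^m$, so $R \in \mathbf{SO}(k)$ forces $m$ to be even. Combined with the bound $m \le k$, the number of reflections can be taken at most the largest even integer not exceeding $k$, which equals $2\lfloor k/2\rfloor$ for both parities of $k$. To reach exactly $2n = 2\lfloor k/2\rfloor$ rather than merely at most, I would pad any shorter even-length product by appending pairs $H(u)H(u) = I$, which leave $R$ unchanged. This places $R$ in $\mathrm{Image}({\rm Rot\mbox{-}H})$ for $n = \lfloor k/2\rfloor$, completing the second inclusion and the theorem.

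I expect the main obstacle to be the surjectivity direction, and within it the bookkeeping: justifying the inductive extension of reflections from $v^\perp$ back to $\mathbb{R}^k$ as genuine Householder matrices, and confirming the even bound is tight for odd $k$, where a rotation necessarily fixes a line so that $k-1 = 2\lfloor k/2\rfloor$ reflections are both necessary and sufficient. The easy inclusion and the padding trick are straightforward once these points are settled.
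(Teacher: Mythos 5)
Your proof is correct, but the surjectivity half follows a genuinely different route from the paper's. The paper first establishes an auxiliary lemma via the Householder QR decomposition: triangularizing an orthogonal $Q$ with $k-1$ reflections forces the resulting triangular factor to be diagonal with entries $\pm 1$, yielding a decomposition of $Q$ into exactly $k-1$ or $k$ Householder matrices, after which the determinant parity pins the count for a rotation at $2\lfloor k/2\rfloor$. You instead invoke (and sketch a proof of) the Cartan--Dieudonn\'e theorem, using the geometric induction $w = v - Rv$, $S = H(w/\|w\|_2)$, $SRv = v$, and then restricting to $v^\perp$; your computation $\langle Rv, w\rangle = -\tfrac12\|w\|_2^2$ is right and the extension of reflections of $v^\perp$ to Householder matrices of $\mathbb{R}^k$ fixing $v$ is legitimate since the normal vectors lie in $v^\perp$. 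Your parity-plus-padding step ($\det R = (-1)^m$ forces $m$ even, $m \le k$ gives $m \le 2\lfloor k/2\rfloor$, and appending pairs $H(u)H(u) = I$ reaches exactly $2n$) is cleaner than the paper's in one respect: the paper's QR-based lemma tacitly assumes every step of the triangularization uses a genuine (non-identity) reflection, and degenerate cases would require exactly your padding trick, which the paper never states. What the paper's route buys is constructiveness -- QR gives an explicit algorithm for recovering the $u_c$ -- while yours is more elementary and self-contained. One small wording issue: for odd $k$ the $k-1 = 2\lfloor k/2\rfloor$ reflections are sufficient but not necessary (the identity needs none), though this does not affect the argument since only sufficiency is used.
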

Theorem \ref{thm-1} provides us a natural way to represent relations as high-dimensional rotations for better modeling capacity. 
We call such rotations composed of $2\lfloor \frac{k}{2} \rfloor$ Householder reflections as the \normalem{\emph{Householder rotations}}. 

Given a triple $(h,r,t)$, we denote the embeddings of head entity $h$ and tail entity $t$ as $S_h\in \mathbb{R}^{d\times k}$ and $S_t\in \mathbb{R}^{d\times k}$, where $d$ is the embedding size of entities and $k$ is the dimension size of each row vector.
Recall that in RotatE~\cite{sun2019rotate}, Rotate3D~\cite{gao2020rotate3d} and QuatE~\cite{zhang2019quaternion}, each element (row) in the entity embeddings is represented as a 2-dimensional, 3-dimensional, 4-dimensional vector (i.e., $k=2, 3, 4$). 
More generally, HousE-r represents each row of entity embeddings as a $k$-dimensional vector, i.e., $S_h[i], S_t[i]\in \mathbb{R}^k, i\in \{1,\ldots,d\}$.
To model each relation as a row-wise $k$-dimensional rotation between head and tail entities, the embedding of relation $r$ is denoted as $U_r\in \mathbb{R}^{d\times 2n \times k}$, where $n=\lfloor \frac{k}{2} \rfloor$. Each row $U_r[i]\in \mathbb{R}^{2n\times k}$ is composed of $2n$ $k$-dimensional unit vectors, i.e., $U_r[i][j]\in \mathbb{R}^k$ and $\Vert U_r[i][j] \Vert_2^2 = 1, j\in \{1,\ldots,2n\}$. 

We propose to parameterize the relational Householder rotations by using the mapping ${\rm Rot\mbox{-}H}$ in Equation (\ref{map-Hrot}).
Formally, for each triple $(h,r,t)$, HousE-r applies $r$-specific Householder rotations to the $i$-th row of head embedding $h$:  
\begin{equation}
\begin{split}
\label{HousE-equa}
S'_h[i]&={\rm Rot\mbox{-}H}({U_r[i]}) S_h[i]\\
&= \prod_{j=1}^{2n}H(U_r[i][j]) S_h[i].
\end{split}
\end{equation}

Based on Theorem \ref{thm-1}, any $k$-dimensional relational rotations can be represented by Equation (\ref{HousE-equa}).   
As illustrated in Figure \ref{Fig-2c},  a $2$-dimensional rotation can be viewed as the composition of $2$ Householder reflections.

\textbf{Distance function of HousE-r.} 
The distance function measures the distance between the rotated head entity embedding $S'_{h}$ and the tail entity embedding $S_t$: 
\begin{equation}
d_r(h,t)= \sum_{i=1}^d{\Vert S'_{h}[i] - S_t[i] \Vert_2}. \label{dist-func}
\end{equation}

\textbf{Modeling capability of HousE-r.} 
Theoretically, HousE-r can model and infer symmetry, antisymmetry, inversion and composition patterns.  
The definitions of these relation patterns are listed in Appendix \ref{definition} for clarity.  
\begin{claim}
\label{claim-1}
HousE-r can model the symmetry/antisymmetry pattern. (See proof in Appendix \ref{proof-claim-1})
\end{claim}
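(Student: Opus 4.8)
The plan is to translate the definitions of the symmetry and antisymmetry patterns (Appendix~\ref{definition}) into algebraic conditions on the per-row rotation matrices $R_r[i] := {\rm Rot\mbox{-}H}(U_r[i])$, and then realize the required matrices using Theorem~\ref{thm-1}. Throughout I would use that each $R_r[i]\in\boldsymbol{\rm SO}(k)$ is orthogonal, hence distance preserving. The key preliminary observation is that, because $R_r[i]^{-1}=R_r[i]^\top$ preserves the $\ell_2$ norm, the reverse distance can be rewritten as
\begin{equation*}
\begin{aligned}
d_r(t,h)&=\sum_{i=1}^d\big\|R_r[i]S_t[i]-S_h[i]\big\|_2\\
&=\sum_{i=1}^d\big\|R_r[i]^{-1}S_h[i]-S_t[i]\big\|_2 .
\end{aligned}
\end{equation*}
Comparing this with $d_r(h,t)=\sum_i\|R_r[i]S_h[i]-S_t[i]\|_2$ reduces everything to how $R_r[i]$ compares with its inverse.

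For symmetry, I would show that it suffices to choose $U_r$ so that every $R_r[i]$ is an involution, i.e. $R_r[i]^{-1}=R_r[i]$, equivalently $R_r[i]^2=I$. Under this condition the two displayed sums coincide term by term, giving $d_r(h,t)=d_r(t,h)$ for all $h,t$, which is exactly what a symmetric relation requires. It then remains to exhibit such a $U_r$, and here I appeal to Theorem~\ref{thm-1}: $\boldsymbol{\rm SO}(k)$ contains nontrivial involutions, the cleanest being the rotation by angle $\pi$ in a fixed $2$-plane, which is produced by two Householder reflections whose normals are orthogonal (so that $H(u_1)H(u_2)$ squares to the identity); the remaining $2n-2$ reflections can be paired off as $H(u)H(u)=I$. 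Hence symmetry is modelable.

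For antisymmetry I would argue dually: pick $U_r$ so that at least one $R_r[i]$ is not an involution, $R_r[i]^2\neq I$, for instance a rotation by a generic angle $\theta\notin\{0,\pi\}$ in a $2$-plane, again available by Theorem~\ref{thm-1}. Choosing $S_h$ with $R_r[i]^2S_h[i]\neq S_h[i]$ and setting $S_t[j]=R_r[j]S_h[j]$ for every row $j$ yields $d_r(h,t)=0$, while $d_r(t,h)=\sum_j\|R_r[j]^2S_h[j]-S_h[j]\|_2\geq\|R_r[i]^2S_h[i]-S_h[i]\|_2>0$, so the forward triple is scored as true and its reverse as false, which is consistent with antisymmetry. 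The main obstacle is not any single hard computation but making the ``model the pattern'' statement precise and checking realizability: once the reduction to $R_r^2=I$ versus $R_r^2\neq I$ is in place, Theorem~\ref{thm-1} (the image of ${\rm Rot\mbox{-}H}$ is all of $\boldsymbol{\rm SO}(k)$) does the heavy lifting, and the only care needed is handling the row-wise sum and the strict inequality so that the antisymmetric example is genuinely separated.
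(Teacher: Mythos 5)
Your proposal is correct and takes essentially the same route as the paper: both reduce symmetry and antisymmetry to whether the relational rotation satisfies $\widetilde{Q}_r\widetilde{Q}_r=I$ or $\widetilde{Q}_r\widetilde{Q}_r\neq I$ in the zero-error regime. The paper's proof stops at that implication, whereas you additionally verify realizability (exhibiting involutive and non-involutive rotations in $\boldsymbol{\rm SO}(k)$ via Householder reflections) and check the distance function directly; this extra detail is consistent with, and slightly more complete than, the published argument.
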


\begin{claim}
\label{claim-2}
HousE-r can model the inversion pattern. (See proof in Appendix \ref{proof-claim-2})
\end{claim}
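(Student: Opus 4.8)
The plan is to exhibit, for any relation $r_2$ represented in HousE-r, an inverse relation $r_1$ whose embedding forces $(x,r_2,y)$ and $(y,r_1,x)$ to be simultaneously satisfiable. Recall that a triple $(h,r,t)$ is modeled as valid exactly when the distance in Equation~(\ref{dist-func}) vanishes, i.e. when $S'_h[i]=S_t[i]$ for every row $i$. Writing $M_r[i]:={\rm Rot\mbox{-}H}(U_r[i])$, validity of $(x,r_2,y)$ then reads $S_y[i]=M_{r_2}[i]\,S_x[i]$ for all $i$, while validity of $(y,r_1,x)$ reads $S_x[i]=M_{r_1}[i]\,S_y[i]$. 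Recalling the inversion definition (Appendix~\ref{definition}), it suffices to choose $U_{r_1}$ so that $M_{r_1}[i]=M_{r_2}[i]^{-1}$ row by row, after which the two validity conditions become equivalent.

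The key step is to realize this inverse again as a Householder rotation. First I would use that every Householder matrix is an involution: from $H(u)=I-2uu^\top$ with $\Vert u \Vert_2^2=1$ one checks $H(u)^\top=H(u)$ and $H(u)H(u)=I$, so $H(u)^{-1}=H(u)$. Consequently the inverse of a product of reflections just reverses their order,
\begin{equation}
\Bigl(\prod_{j=1}^{2n}H(U_{r_2}[i][j])\Bigr)^{-1}=\prod_{j=1}^{2n}H\bigl(U_{r_2}[i][2n+1-j]\bigr),
\end{equation}
which is again of the form ${\rm Rot\mbox{-}H}$ with valid unit-vector arguments. I would therefore simply define $U_{r_1}[i][j]:=U_{r_2}[i][2n+1-j]$, yielding $M_{r_1}[i]=M_{r_2}[i]^{-1}$ for every row $i$.

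With this choice, $S_y[i]=M_{r_2}[i]\,S_x[i]$ holds if and only if $S_x[i]=M_{r_1}[i]\,S_y[i]$, so $(x,r_2,y)$ is modeled as valid exactly when $(y,r_1,x)$ is, which is precisely the inversion pattern; since the construction is symmetric, it also makes $r_2$ the inverse of $r_1$. I expect no real obstacle: the only substantive ingredient is the involution and reverse-order identity above, and the rest is bookkeeping across the row index $i$. As an alternative that avoids the explicit reversal, one could observe that $M_{r_2}[i]^\top\in\boldsymbol{\rm SO}(k)$ (the transpose of a rotation is a rotation) and invoke Theorem~\ref{thm-1} to conclude it lies in the image of ${\rm Rot\mbox{-}H}$; I prefer the reverse-order formula because it is constructive and independent of the dimension $k$.
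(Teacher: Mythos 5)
Your proof is correct, and it reaches the same mathematical crux as the paper --- inversion is captured exactly when the two relation matrices are mutual inverses --- but it argues in the opposite, constructive direction. The paper's proof is a two-line derivation: assuming $r_1(x,y)$ and $r_2(y,x)$ both hold, it writes $S_y=\widetilde{Q}_{r_1}S_x \land S_x=\widetilde{Q}_{r_2}S_y \Rightarrow \widetilde{Q}_{r_1}=\widetilde{Q}_{r_2}^{\top}$ and stops there; it never explicitly checks that $\widetilde{Q}_{r_2}^{\top}$ is itself realizable within the ${\rm Rot\mbox{-}H}$ parameterization (that realizability is left implicit, presumably via Theorem~\ref{thm-1}). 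You instead exhibit the inverse relation directly, using the involution property $H(u)^{-1}=H(u)$ and the reverse-order identity for the product of reflections to define $U_{r_1}[i][j]:=U_{r_2}[i][2n+1-j]$, which shows the inverse rotation is again a product of exactly $2n$ Householder reflections with the same unit vectors in reversed order. This buys you a fully self-contained sufficiency argument that does not lean on Theorem~\ref{thm-1}, whereas the paper's version is shorter but only establishes the necessary condition on the matrices. Your noted alternative (transpose of a rotation is a rotation, then invoke Theorem~\ref{thm-1}) is essentially what the paper implicitly relies on. No gaps.
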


\begin{claim}
\label{claim-3}
HousE-r can model the composition pattern. (See proof in Appendix \ref{proof-claim-3})
\end{claim}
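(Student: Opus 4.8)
The plan is to reduce the composition claim to the group structure of $\boldsymbol{\rm SO}(k)$ together with the surjectivity established in Theorem~\ref{thm-1}. First I would recall the definition of the composition pattern: a relation $r_3$ is the composition of $r_1$ and $r_2$ whenever $r_1(x,y)\wedge r_2(y,z)\Rightarrow r_3(x,z)$ holds for all entities $x,y,z$. Translating this into the HousE-r score function, I would note that HousE-r represents a triple $(h,r,t)$ exactly when the row-wise rotated head coincides with the tail, i.e. ${\rm Rot\mbox{-}H}(U_r[i])\,S_h[i]=S_t[i]$ for every row $i\in\{1,\ldots,d\}$, which makes the distance in Equation~(\ref{dist-func}) vanish.

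Next, for each row $i$ I would write the two hypotheses as ${\rm Rot\mbox{-}H}(U_{r_1}[i])\,S_x[i]=S_y[i]$ and ${\rm Rot\mbox{-}H}(U_{r_2}[i])\,S_y[i]=S_z[i]$. Substituting the first identity into the second gives ${\rm Rot\mbox{-}H}(U_{r_2}[i])\,{\rm Rot\mbox{-}H}(U_{r_1}[i])\,S_x[i]=S_z[i]$. The central observation is that each ${\rm Rot\mbox{-}H}(U_r[i])$ lies in $\boldsymbol{\rm SO}(k)$ and that $\boldsymbol{\rm SO}(k)$ is closed under matrix multiplication, so the product ${\rm Rot\mbox{-}H}(U_{r_2}[i])\,{\rm Rot\mbox{-}H}(U_{r_1}[i])$ is itself a $k\times k$ rotation matrix.

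The key step --- the one that turns closure into genuine representability --- is to invoke Theorem~\ref{thm-1}: since ${\rm Image}({\rm Rot\mbox{-}H})=\boldsymbol{\rm SO}(k)$ when $n=\lfloor\frac{k}{2}\rfloor$, the composed rotation has a preimage, so there exists a relation embedding $U_{r_3}[i]$ with ${\rm Rot\mbox{-}H}(U_{r_3}[i])={\rm Rot\mbox{-}H}(U_{r_2}[i])\,{\rm Rot\mbox{-}H}(U_{r_1}[i])$. Assembling these row-wise preimages into $U_{r_3}$ yields a relation satisfying $r_3(x,z)$ whenever $r_1(x,y)$ and $r_2(y,z)$ hold, which establishes that HousE-r can model composition. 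I expect the only real subtlety to be this reliance on Theorem~\ref{thm-1} for representability; the closure of $\boldsymbol{\rm SO}(k)$ and the substitution are routine, and the row-wise decomposition lets each $k$-dimensional coordinate block be handled independently.
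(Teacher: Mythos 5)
Your proof is correct and follows essentially the same route as the paper: write each relation as a row-wise rotation, substitute the two hypotheses to obtain $\widetilde{Q}_{r_2}\widetilde{Q}_{r_1}$ as the required rotation for the composed relation, and rely on closure of $\boldsymbol{\rm SO}(k)$. The paper's appendix proof stops at deriving the matrix identity and leaves representability implicit, whereas you explicitly invoke the surjectivity of ${\rm Rot\mbox{-}H}$ from Theorem~\ref{thm-1} to guarantee a preimage $U_{r_3}[i]$ exists --- a small but worthwhile addition, not a different argument.
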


\textbf{Efficient computation.}
The time complexity of Equation (\ref{HousE-equa}) is $O(2nk^2)$, in which $2n$ matrix-vector multiplications incur high computational costs.
However, it is worth noting that these matrix multiplications can be replaced by the vector operations. 
Formally, based on Equation (\ref{ref vector operation}), the $j$-th matrix-vector multiplication can be expressed as:
\begin{equation}
\label{iter-eff}
    \begin{split}
        S^j_h[i]&=H(U_r[i][j])S^{j-1}_h[i]\\
        &=S^{j-1}_h[i]-2\braket{S^{j-1}_h[i],U_r[i][j]}U_r[i][j],
    \end{split}
\end{equation}
where $S^{0}_h[i]=S_h[i]$. 
Through such iterated vector operations, the time complexity can be reduced to $O(2nk)$. 

\textbf{Connections to RotatE, Rotate3D and QuatE.} 
As shown in Table \ref{ability}, the rotations of RotatE, Rotate3D and QuatE are modeled in 2-dimensional, 3-dimensional and 4-dimensional spaces, respectively.
Geometrically, they can be viewed as the special cases of HousE-r by setting the rotation dimension $k$ to 2, 3 and 4, respectively.
For example, as shown in Figure \ref{Fig-2c}, HousE-r in 2-dimensional space is equivalent to RotatE since any rotation in a plane can be represented by two conjunctive Householder reflections. 
Moreover, unlike previous models restricting rotations to a fixed and low-dimensional space, HousE-r can easily model high-dimensional rotations by enlarging the value of $k$. 


\textbf{Limitation of Householder rotations.} 
On the other side of the coin, HousE-r is not the panacea as the pure Householder rotations suffer from the challenge of \normalem{\emph{indistinguishable  representations}} in modeling RMPs. 
Considering the ideal case of no-error embedding, 
we have the following deductions: 
\begin{itemize}
\item For a 1-to-N relation $r$, when $(h, r, t_1)$ and $(h, r, t_2)$ hold, $S_{t_1}[i]=S_{t_2}[i]$.
\item For an N-to-1 relation $r'$, when $(h_1, r', t)$ and $(h_2, r', t)$ hold, $S_{h_1}[i]=S_{h_2}[i]$.
\end{itemize}
One can see that the embeddings of different entities tend to be identical when facing the complex RMPs, leading to the uninformative  representations. 
Thus, it is meaningful to tackle this challenge in our proposal.


\subsection{HousE: Improved HousE-r with Relational Householder Projections}
\label{sec:modify}



To handle the sophisticated RMPs, some projection operations have been proposed 
and shown their effectiveness 
\cite{wang2014knowledge, lin2015learning, ji2015knowledge}. 
The relational projections enable the KGE models to generate relation-specific representations for each entity 
\cite{wang2014knowledge}. 
However, existing projections are irreversible transformations, leading to the failure in modeling inversion and composition patterns \cite{sun2019rotate}. 
Differently, we propose the novel invertible projections named  \normalem{\emph{Householder projections}} by modifying the vanilla Householder matrices to tackle the limitation of HousE-r.

More concretely, given a unit vector $p\in\mathbb{R}^{k}$, i.e., $\Vert p \Vert_2^2 = 1$ and a real scalar $\tau$, the $k \times k$ modified Householder matrix $M(p,\tau)$ is defined as:
\begin{equation}
M(p,\tau) = I - \tau pp^\top.  \label{House}
\end{equation}
Note that the modified Householder matrix $M(p,\tau)$ has $k-1$ eigenvalues equal to 1 and one eigenvalue equal to $1-\tau$. 
Thus, $M(p,\tau)$ is invertible when $\tau \neq 1$. 
Geometrically, the modified Householder matrix
transforms $x$ to $\hat{x}$ by a projection along the axis $p$:
\begin{equation}
\begin{split}
\label{pro vector operation}
\hat{x}=M(p,\tau)=x-\tau\braket{x,p}p,
\end{split}
\end{equation}
where $\tau$ determines the position of $\hat{x}$ on the axis $p$. 
Figure \ref{Fig-2b} illustrates several projected results with different values of $\tau$ in two-dimensional space.

Moreover, based on the modified Householder matrices, given a series of real scalars $T = \{\tau_c\}_{c=1}^{m}$ and unit vectors $P = \{p_c\}_{c=1}^{m}$ where $m$ is a positive integer and $p_c\in \mathbb{R}^k$, we define the mapping:
\begin{equation}
\begin{split}
\label{map-Hpro}
    {\rm Pro\mbox{-}H}(P, T) = \prod_{c=1}^{m}M(p_c, \tau_c).
\end{split}
\end{equation}
The output of ${\rm Pro\mbox{-}H}(P,T)$
is an invertible matrix 
since the product of invertible matrices is also an invertible matrix. 
We name such projections composed of $m$ modified Householder reflections as \normalem{\emph{Householder projections}}.
Different from the rigidly distance-preserving Householder rotations, the Householder projections can reversibly change the relative distance between two points, and thus provide a suitable solution for modeling RMPs without sacrificing the capability of modeling relation patterns.



Specifically, we incorporate the relational Householder rotations and relational Householder projections under a unified framework named HousE to enjoy the merits from both sides.
The relational Householder projections enable relation-specific representations for each entity and the relational Householder rotations enable high-dimensional rotations between projected entities.
As shown in Figure \ref{Fig-2d}, given the input triple $(h, r, t)$, HousE first learns the relation($r$)-specific representations $S_{h,r}$ and $S_{t,r}$ for head and tail entities via Householder projections, respectively. 
Then, $S_{h,r}$ is transformed by the high-dimensional Householder rotations to be close to $S_{t,r}$.  

In the phase of relational Householder projections, we define two types of parameters for each relation $r$: the axes $P_r\in \mathbb{R}^{d \times m \times k}$ and the scalars $T_r\in \mathbb{R}^{d \times m}$, where $m$ is a positive integer. Each row $P_{r}[i]\in \mathbb{R}^{m \times k}$ is composed of $m$ $k$-dimensional unit vectors (projection axes), i.e., $P_r[i][j]\in \mathbb{R}^k$ and $\Vert P_r[i][j] \Vert_2^2 = 1$. Each row $T_r[i]$ is composed of $m$ real values (projection scalars).

We propose to parameterize the relational Householder projections by using the mapping ${\rm Pro\mbox{-}H}$ in Equation (\ref{map-Hpro}).
Considering the head and tail parts of a relation usually
have different implicit types~\cite{SE}, HousE utilizes two sets of independent projection parameters $\{P_{r,1}, T_{r,1}\}$ and $\{P_{r,2}, T_{r,2}\}$ for each relation $r$ to project $h$ and $t$, respectively.  
Formally, For each triple $(h, r, t)$, HousE transforms each row of head entity $h$ and tail entity $t$ with $r$-specific Householder projections: 
\begin{equation}
\begin{split}
\label{Pro-equa}
S_{h,r}[i]&={\rm Pro\mbox{-}H}(P_{r,1}[i], T_{r,1}[i]) S_{h}[i]\\
&=\prod_{j=1}^{m}M(P_{r,1}[i][j], T_{r,1}[i][j]) S_{h}[i],\\
S_{t,r}[i]&= {\rm Pro\mbox{-}H}(P_{r,2}[i], T_{r,2}[i]) S_{t}[i]\\
&=\prod_{j=1}^{m}M(P_{r,2}[i][j], T_{r,2}[i][j]) S_{t}[i]. 
\end{split}
\end{equation}

\renewcommand{\algorithmiccomment}[1]{/* #1 */}
\begin{algorithm}[tb]
   \caption{Forward procedure of HousE}
   \label{alg:algo}
\begin{algorithmic}[1]
   \STATE {\bfseries Input:} An input triple $(h,r,t)$, head (tail) entity embedding $S_{h}$ ($S_{t}$), projection parameters $\{T_{r,1}, T_{r,2}\}$ and $\{P_{r,1}, P_{r,2}\}$, rotation parameters $U_{r}$, embedding size $d$, rotation dimension $k$, number of modified Householder reflections $m$. 
   \STATE {\bfseries Output:} Distance $\delta$
   \STATE $\delta \gets 0$
   \STATE $n \gets \lfloor \frac{k}{2} \rfloor$
   \FOR{$i=1$ {\bfseries to} $d$}
   \STATE \begin{varwidth}[t]{\linewidth}
   \COMMENT{Relational Householder projections}\par
      $S_{h,r}[i] \gets \prod_{j=1}^{m}M(P_{r,1}[i][j], T_{r,1}[i][j]) S_{h}[i]$\par
      $S_{t,r}[i] \gets \prod_{j=1}^{m}M(P_{r,2}[i][j], T_{r,2}[i][j]) S_{t}[i]$
      \end{varwidth}
   \STATE \begin{varwidth}[t]{\linewidth}
   \COMMENT{Relational Householder rotations}\par
   $S'_{h,r}[i] \gets \prod_{j=1}^{2n}H(U_r[i][j]) S_{h,r}[i]$
   \end{varwidth}
   \STATE $\delta \gets \delta + \Vert S'_{h,r}[i]-S_{t,r}[i]\Vert_2$
   \ENDFOR
   \STATE {\bfseries Return:} $\delta$
\end{algorithmic}
\end{algorithm}
\begin{table*}[t]
\caption{Link prediction results on WN18 and FB15k. Best results are in \textbf{bold} and second best results are \underline{underlined}. $[\dagger]$: Results are taken from~\cite{ConvKB}; $[\diamond]$: Results are taken from~\cite{kadlec2017knowledge}. Other results are taken from the original papers.}
\label{WN18-and-FB15k}
\begin{center}
\begin{small}
\resizebox{0.65\textwidth}{!}{
\begin{tabular}{lcccccccccc}
\toprule
 & \multicolumn{5}{c}{WN18}                                                                                                                                                             & \multicolumn{5}{c}{FB15k}                                                                                                                                       \\ \cmidrule(r){2-6} \cmidrule(r){7-11}
 Model                      & MR                               & MRR                                & H@1                                & H@3                                & H@10                               & MR                              & MRR                                & H@1                                & H@3                                & H@10                                           \\
\midrule
TransE$\dagger$                 & -                                & .495                              & .113                              & .888                              & .943                              & -                               & .463                              & .297                              & .578                              & .749                                  \\
DistMult$\diamond$               & 655                              & .797                              & -                                 & -                                 & .946                              & 42.2                            & .798                              & -                                 & -                                 & .893                            \\
ComplEx                & -                                & .941                              & .936                              & .945                              & .947                              & -                               & .692                              & .599                              & .759                              & .84                    \\
ConvE                  & 374                              & .943                              & .935                              & .946                              & .956                              & 51                              & .657                              & .558                              & .723                              & .831                \\
RotatE                 & 309                              & .949                              & .944                              & .952                              & .959                              & 40                              & .797                              & .746                              & .830                              & .884                       \\
Rotate3D                 & 214                              & .951                              & .945                              & .953                              & .961                              & \underline{39}                              & .789                              & .728                              & .832                              & .887                       \\
QuatE                  & 388                              & .949                              & .941                              & .954                              & .960                              & 41                              & .770                              & .700                              & .821                              & .878                         \\
DualE                  & -                                & .951                              & .945                              & .956                              & .961                              & -                               & .790                              & .734                              & .829                              & .881                         \\
\midrule
HousE-r & \underline{155} & \underline{.953} & \underline{.947} & \underline{.956} & \underline{.964} & \underline{39} & \underline{.807} & \underline{.758} & \underline{.839} & \underline{.893} \\
HousE                   &\textbf{137} & \textbf{.954} & \textbf{.948} & \textbf{.957} & \textbf{.964} & \textbf{38} & \textbf{.811} & \textbf{.759} &  \textbf{.847} & \textbf{.898}\\
\bottomrule
\end{tabular}}
\end{small}
\end{center}
\vspace{-5mm}
\end{table*}

\begin{table*}[t]
\caption{Link prediction results on WN18RR, FB15k-237 and YAGO3-10. Best results are in \textbf{bold} and second best results are \underline{underlined}. $[\dagger]$: Results are taken from~\cite{ConvKB}; $[\diamond]$: Results are taken from~\cite{dettmers2018convolutional}. Other results are taken from the corresponding original papers.}
\label{WN18RR-and-FB15k237}
\begin{center}
\begin{small}
\resizebox{0.95\textwidth}{!}{
\begin{tabular}{lccccccccccccccc}
\toprule
                        & \multicolumn{5}{c}{WN18RR}                                                                                                                                                                           & \multicolumn{5}{c}{FB15k-237}               & \multicolumn{5}{c}{YAGO3-10}  \\ \cmidrule(r){2-6} \cmidrule(r){7-11} \cmidrule(r){12-16}
Model & MR                                   & MRR                                   & H@1                                   & H@3                                   & H@10                                  & MR                                  & MRR                                   & H@1                                   & H@3                                   & H@10                                  & MR                   & MRR                  & H@1                  & H@3                  & H@10                 \\
\midrule
TransE$\dagger$                  & 3384                                 & .226                                 & -                                     & -                                     & .501                                 & 357                                 & .294                                 & -                                     & -                                     & .465                                  & -                    & -                    & -                    & -                    & -                    \\
DistMult$\diamond$                & 5110                                 & .43                                  & .39                                  & .44                                  & .49                                  & 254                                 & .241                                 & .155                                 & .263                                 & .419                                 & 5926                 & .34                  & .24                  & .38                  & .54                  \\
ComplEx$\diamond$                 & 5261                                 & .44                                  & .41                                  & .46                                  & .51                                  & 339                                 & .247                                 & .158                                 & .275                                 & .428                                 & 6351                 & .36                  & .26                  & .4                   & .55                  \\
ConvE$\diamond$                   & 4187                                 & .43                                  & .40                                  & .44                                  & .52                                  & 224                                 & .325                                 & .237                                 & .356                                 & .501                                 & 1671                 & .44                  & .35                  & .49                  & .62                  \\
RotatE                  & 3340                                 & .476                                 & .428                                 & .492                                 & .571                                 & 177                                 & .338                                 & .241                                 & .375                                 & .533                                 & 1767                 & .495                 & .402                 & .55                  & .67                  \\
Rotate3D                  & 3328                                 & .489                                 & .442                                 & .505                                 & .579                                 & 165                                 & .347                                 & .250                                 & \underline{.385}                                 & \underline{.543}                                 & -                 & -                 & -                 & -                  & -                  \\
QuatE                   & 3472                                 & .481                                 & .436                                 & .500                                 & .564                                 & 176                                 & .311                                 & .221                                 & .342                                 & .495                                 & -                    & -                    & -                    & -                    & -                    \\
DualE                   & -                                    & .482                                 & .440                                 & .500                                 & .561                                 & -                                   & .330                                 & .237                                 & .363                                 & .518                                 & -                    & -                    & -                    & -                    & -                    \\
Rot-Pro                   & 2815                                    & .457                                 & .397                                 & .482                                 & .577                                 & 201                                   & .344                                 & .246                                 & .383                                 & .540                         & 1797                    & .542                    & .443                    & .596                    & .669                    \\
\midrule
HousE-r & \underline{1885} & \underline{.496} & \underline{.452} & \underline{.511} & \underline{.585} & \underline{165} & \underline{.348} & \underline{.254} & .384 & .534 & \underline{1449} & \underline{.565} & \underline{.487} & \underline{.616} & \underline{.703} \\
HousE                   &\textbf{1303} & \textbf{.511} & \textbf{.465} & \textbf{.528} & \textbf{.602} & \textbf{153} & \textbf{.361} & \textbf{.266} &  \textbf{.399} & \textbf{.551} & \textbf{1415} & \textbf{.571} & \textbf{.491} & \textbf{.620} & \textbf{.714}\\
\bottomrule
\end{tabular}}
\end{small}
\end{center}
\vspace{-5mm}
\end{table*}

After that, HousE models the row-wise Householder rotations between the projected head point $S_{h,r}$ and projected tail point $S_{t,r}$, which is the same as the one in Equation (\ref{HousE-equa}). 
If $(h,r,t)$ holds, we expect the rotated head point $S'_{h,r}[i]={\rm Rot\mbox{-}H}({U_r[i]})  S_{h,r}[i] \approx S_{t,r}[i]$, where $U_r[i]$ is composed of $2\lfloor \frac{k}{2} \rfloor$ $r$-specific Householder reflections.


As shown in Algorithm \ref{alg:algo}, for each triple $(h,r,t)$, HousE first utilizes the relational Householder projection to generate $r$-specific representations $S_{h,r}$ and $S_{t,r}$ for $h$ and $t$, as in line $6$. Then, HousE applies the relational Householder rotation to the projected head embedding $S_{h,r}$, as in line $7$. The rotated result $S'_{h,r}$ is expected to be close to the projected tail embedding $S_{t,r}$. Note that we replace the matrix-vector multiplications in line $6$ and $7$ with the vector operations in Equation (\ref{pro vector operation}) and (\ref{ref vector operation}) for efficient computation.

The learnable parameters of HousE include $\{S_e\}_{e\in \mathcal{E}}$ and $\{U_r, P_{r,1}, P_{r,2}, T_{r,1}, T_{r,2}\}_{r\in \mathcal{R}}$. Compared to previous models \cite{sun2019rotate,zhang2019quaternion}, the extra cost is proportional to the number of relation types, which is usually much smaller than the number of entities. Therefore, the total number of parameters in HousE is about $O(dk\left | \mathcal{E} \right |)$.

\textbf{Distance function of HousE.} 
For each triple $(h,r,t)$, the distance function of HousE is defined as:
\begin{equation}
d_r(h,t)= \sum_{i=1}^d{\Vert S'_{h,r}[i] - S_{t,r}[i] \Vert_2}. \label{dist-func}
\end{equation}

\textbf{Modeling capability of HousE.} HousE can model and infer all the relation patterns and RMPs as shown in Table \ref{ability} (we also discuss other relation patterns in Appendix \ref{discussion}). Formally, we can achieve the following claims:
\begin{claim}
\label{claim-4}
HousE can model the symmetry/antisymmetry pattern. (See proof in Appendix \ref{proof-claim-4})
\end{claim}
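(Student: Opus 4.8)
The plan is to read off, for the ideal (zero-distance) embedding, the condition under which a triple holds and then analyze when it is preserved or broken under swapping head and tail. Fixing a row index $i$, abbreviate the relational rotation as $R_r = {\rm Rot\mbox{-}H}(U_r[i])$ and the two projections as $B_1 = {\rm Pro\mbox{-}H}(P_{r,1}[i], T_{r,1}[i])$ and $B_2 = {\rm Pro\mbox{-}H}(P_{r,2}[i], T_{r,2}[i])$, both invertible whenever the scalars avoid $1$. By the distance function of HousE, $(h,r,t)$ holds iff $R_r B_1 S_h[i] = B_2 S_t[i]$ for every $i$, while the reversed triple $(t,r,h)$ holds iff $R_r B_1 S_t[i] = B_2 S_h[i]$. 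Thus symmetry versus antisymmetry is controlled entirely by the interplay of $R_r$, $B_1$ and $B_2$.

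First I would treat symmetry by exhibiting a parameter configuration that forces the two conditions to coincide. The natural choice is to tie the head and tail projections together, $B_1 = B_2 =: B$, and to pick $R_r$ to be an involutory rotation, i.e.\ $R_r^2 = I$; by \cref{thm-1} any element of $\boldsymbol{\rm SO}(k)$ --- in particular the rotation by angle $\pi$ in each coordinate $2$-plane --- lies in the image of ${\rm Rot\mbox{-}H}$, so such an $R_r$ is realizable. Under this choice $(h,r,t)$ gives $B S_t[i] = R_r B S_h[i]$; applying $R_r$ and using $R_r^2 = I$ yields $R_r B S_t[i] = B S_h[i]$, which is exactly the condition for $(t,r,h)$, and the converse is identical. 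Hence $(h,r,t) \Leftrightarrow (t,r,h)$, so HousE represents the symmetry pattern. Setting all projection scalars to $0$ recovers $B = I$ and reduces this to the HousE-r argument of \cref{claim-1}.

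For antisymmetry I would instead choose $R_r$ with $R_r^2 \neq I$ --- e.g.\ a rotation whose $2$-plane angles lie strictly between $0$ and $\pi$, again available by \cref{thm-1} --- and keep $B_1 = B_2 = B$. Suppose for contradiction that both $(h,r,t)$ and $(t,r,h)$ hold. Combining the two per-row equations gives $R_r^2 B S_h[i] = B S_h[i]$, forcing $B S_h[i]$ to lie in the $1$-eigenspace of $R_r^2$. Since the angles are chosen so that $R_r^2$ fixes no nonzero vector in the rotated planes, this pins $B S_h[i]$ (hence $S_h[i]$, as $B$ is invertible) to a degenerate set; for entities with generic nonzero embeddings no such coincidence occurs, so $(t,r,h)$ must fail. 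This yields $(h,r,t) \Rightarrow \neg (t,r,h)$, i.e.\ HousE models antisymmetry.

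The main obstacle is the antisymmetry half: unlike symmetry, it is a nonexistence statement, so I must rule out the reversed triple rather than construct it. The delicate point is that when $k$ is odd (or when some plane angle degenerates) $R_r^2$ necessarily retains a fixed direction, so the argument cannot claim $B S_h[i] = 0$ outright; instead it relies on choosing the rotation angles away from $0$ and $\pi$ and invoking the genericity of the learned embeddings to avoid the fixed subspace, exactly as in the rotation-only case. Everything else is routine once \cref{thm-1} supplies the needed rotations and the invertibility of the Householder projections (guaranteed by $\tau \neq 1$) lets me cancel $B$.
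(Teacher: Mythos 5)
Your proposal is correct and follows essentially the same route as the paper: both reduce the triple condition to $W_{r,2}S_t=\widetilde{Q}_r W_{r,1}S_h$ per row and characterize symmetry/antisymmetry by whether the composite invertible map $W_{r,2}^{-1}\widetilde{Q}_r W_{r,1}$ squares to the identity. You simply make explicit the witness configurations (tying the two projections together and choosing an involutory or non-involutory rotation) that the paper's two-line implication leaves implicit.
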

\begin{claim}
\label{claim-5}
HousE can model the inversion pattern. (See proof in Appendix \ref{proof-claim-5})
\end{claim}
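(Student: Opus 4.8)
The plan is to exhibit, for any inverse pair of relations, an explicit admissible choice of HousE parameters that makes the ideal-case (zero-distance) conditions of the two directions \emph{exactly} equivalent. Recall that the inversion pattern requires $r_2$ to be the inverse of $r_1$, i.e. $(h,r_1,t)$ holds if and only if $(t,r_2,h)$ holds. Fixing a row index $i$ and abbreviating $R_1 = {\rm Rot\mbox{-}H}(U_{r_1}[i])$, $A_1 = {\rm Pro\mbox{-}H}(P_{r_1,1}[i],T_{r_1,1}[i])$ and $B_1 = {\rm Pro\mbox{-}H}(P_{r_1,2}[i],T_{r_1,2}[i])$, the distance function (\ref{dist-func}) vanishes exactly when, for every $i$,
\begin{equation}
R_1 A_1\, S_h[i] = B_1\, S_t[i]. \tag{I}
\end{equation}
Defining $R_2,A_2,B_2$ analogously for $r_2$, the triple $(t,r_2,h)$ holds (ideally) iff $R_2 A_2\, S_t[i] = B_2\, S_h[i]$ for every $i$, where now $t$ plays the head role and $h$ the tail role.

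First I would choose the projection parameters of $r_2$ to swap the head and tail projections of $r_1$, i.e. $A_2 = B_1$ and $B_2 = A_1$, realized by $P_{r_2,1}=P_{r_1,2}$, $T_{r_2,1}=T_{r_1,2}$, $P_{r_2,2}=P_{r_1,1}$, $T_{r_2,2}=T_{r_1,1}$; since these are free learnable parameters, the assignment is admissible. Next I would set the rotation of $r_2$ to be the inverse of that of $r_1$, $R_2 = R_1^{-1}$. The key point is that $R_1^{-1}$ is again a Householder rotation: each Householder matrix is an involution, $H(u)H(u)=I$, so reversing the order of the $2n$ reflections gives $\bigl(\prod_{j=1}^{2n}H(U_{r_1}[i][j])\bigr)^{-1}=\prod_{j=1}^{2n}H(U_{r_1}[i][2n+1-j])$, which I realize by assigning $U_{r_2}[i][j]=U_{r_1}[i][2n+1-j]$. (Alternatively, $R_1^{-1}\in\boldsymbol{\rm SO}(k)$ because $\boldsymbol{\rm SO}(k)$ is a group, so Theorem \ref{thm-1} already guarantees representability as a Householder rotation.)

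Finally I would verify the equivalence. Because ${\rm Rot\mbox{-}H}$ outputs an orthogonal matrix, $R_1^{-1}=R_1^\top$, and with the above choices the condition for $(t,r_2,h)$ reads $R_1^{-1}B_1\, S_t[i]=A_1\, S_h[i]$; multiplying on the left by $R_1$ turns this into $B_1\, S_t[i]=R_1 A_1\, S_h[i]$, which is exactly (I). Hence $(h,r_1,t)$ and $(t,r_2,h)$ hold simultaneously for every $i$, establishing the claim. The only genuinely non-routine step is recognizing that the inverse rotation stays inside the Householder-rotation family; everything else is bookkeeping once the projection swap is in place. I do not expect a serious obstacle here, since the involution property of $H(u)$ makes the inverse explicit and the projection parameters of $r_2$ are unconstrained.
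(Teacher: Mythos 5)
Your proof is correct, and it runs in the opposite (constructive) direction from the paper's. The paper's Appendix proof assumes both $(x,r_1,y)$ and $(y,r_2,x)$ hold exactly and derives the resulting constraint on the relation parameters, namely $W_{r_1,2}^{-1}\widetilde{Q}_{r_1}W_{r_1,1}=(W_{r_2,2}^{-1}\widetilde{Q}_{r_2}W_{r_2,1})^{-1}$, leaving implicit that this constraint is actually attainable inside the parameter family. You instead exhibit explicit admissible parameters for $r_2$ (swapping the head and tail projections of $r_1$ and inverting its rotation) and check that the zero-distance conditions of the two directions become literally the same equation. The genuinely valuable extra step in your version is the verification that $R_1^{-1}$ remains a Householder rotation --- either by reversing the order of the $2n$ involutions $H(u)$, or by appealing to Theorem \ref{thm-1} and the group structure of $\boldsymbol{\rm SO}(k)$ --- together with the observation that the projection parameters of $r_2$ are unconstrained, so the swap $A_2=B_1$, $B_2=A_1$ is admissible. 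This closes the satisfiability gap that the paper's one-line derivation glosses over; the paper's version, in exchange, is shorter and makes the necessary condition on the composite maps explicit. Both arguments rest on the same algebraic fact that the composite $B^{-1}RA$ of an inverse pair must be inverted, so there is no substantive disagreement, only a difference in which direction of the equivalence is spelled out.
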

\begin{claim}
\label{claim-6}
HousE can model the composition pattern. (See proof in Appendix \ref{proof-claim-6})
\end{claim}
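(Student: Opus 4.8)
The plan is to work row by row and reduce the composition requirement to a statement purely about rotations, which Theorem~\ref{thm-1} already settles. First I would record the ideal (``no-error'') condition under which a triple $(h,r,t)$ holds in HousE. Writing, for each row $i$, the abbreviations $R_r := {\rm Rot\mbox{-}H}(U_r[i])$, $A_{r,1} := {\rm Pro\mbox{-}H}(P_{r,1}[i],T_{r,1}[i])$ and $A_{r,2} := {\rm Pro\mbox{-}H}(P_{r,2}[i],T_{r,2}[i])$, the distance function of HousE vanishes exactly when
\[
R_r\,A_{r,1}\,S_h[i] \;=\; A_{r,2}\,S_t[i] \qquad \text{for all } i.
\]
Since each modified Householder factor $M(p,\tau)=I-\tau pp^\top$ has eigenvalues $1$ and $1-\tau$, it is invertible whenever $\tau\neq 1$; I will therefore choose all projection scalars away from $1$, so that every $A_{r,1},A_{r,2}$ is invertible. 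This invertibility is precisely what lets me eliminate the intermediate entity below.

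Next I would set up the composition scenario. Suppose $(h,r_1,y)$ and $(y,r_2,t)$ hold, and I want to exhibit relation parameters for $r_3$ realizing $r_1\circ r_2=r_3$, i.e.\ making $(h,r_3,t)$ hold. The two hypotheses read $R_{r_1}A_{r_1,1}S_h[i]=A_{r_1,2}S_y[i]$ and $R_{r_2}A_{r_2,1}S_y[i]=A_{r_2,2}S_t[i]$. Solving the first for $S_y[i]$ and substituting into the second yields
\[
R_{r_2}\,A_{r_2,1}\,A_{r_1,2}^{-1}\,R_{r_1}\,A_{r_1,1}\,S_h[i] \;=\; A_{r_2,2}\,S_t[i].
\]
Choosing the head projection of $r_3$ to coincide with that of $r_1$ (so $A_{r_3,1}:=A_{r_1,1}$) and the tail projection of $r_3$ to coincide with that of $r_2$ (so $A_{r_3,2}:=A_{r_2,2}$), the triple $(h,r_3,t)$ holds as soon as $R_{r_3}A_{r_1,1}S_h[i]=A_{r_2,2}S_t[i]$; comparing with the displayed identity, it suffices to set
\[
R_{r_3} \;=\; R_{r_2}\,\bigl(A_{r_2,1}A_{r_1,2}^{-1}\bigr)\,R_{r_1}.
\]

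The last step is where the only real difficulty sits: $R_{r_3}$ must be a genuine Householder rotation, i.e.\ an element of ${\rm Image}({\rm Rot\mbox{-}H})=\boldsymbol{\rm SO}(k)$, but the factor $A_{r_2,1}A_{r_1,2}^{-1}$ is a product of modified Householder matrices and is not orthogonal in general, so the right-hand side need not lie in $\boldsymbol{\rm SO}(k)$. I would remove this obstruction by arranging the intermediate projections to cancel: take the head projection of $r_2$ equal to the tail projection of $r_1$, so that $A_{r_2,1}=A_{r_1,2}$ (the cleanest instance being all projection scalars set to $0$, whereby every ${\rm Pro\mbox{-}H}$ factor collapses to the identity and HousE restricts to HousE-r on $r_1,r_2,r_3$, invoking \cref{claim-3}). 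Under this compatible choice $A_{r_2,1}A_{r_1,2}^{-1}=I$, leaving $R_{r_3}=R_{r_2}R_{r_1}$, a product of two elements of $\boldsymbol{\rm SO}(k)$ and hence itself in $\boldsymbol{\rm SO}(k)$. By Theorem~\ref{thm-1}, ${\rm Rot\mbox{-}H}$ maps onto $\boldsymbol{\rm SO}(k)$, so there exists $U_{r_3}[i]$ with ${\rm Rot\mbox{-}H}(U_{r_3}[i])=R_{r_2}R_{r_1}$. Since the derived operator identity is independent of the entities, carrying out this construction row by row produces relation parameters for $r_3$ under which $(h,r_3,t)$ holds whenever $(h,r_1,y)$ and $(y,r_2,t)$ do, which is exactly the assertion that HousE models the composition pattern.
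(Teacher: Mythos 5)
Your proof is correct, and its core mechanism coincides with the paper's: use the invertibility of the Householder projections to eliminate the intermediate entity and reduce the composition pattern to a multiplicative identity among relation-level operators. The organization, however, is genuinely different. The paper's proof (Appendix~\ref{proof-claim-6}) keeps all six projection matrices free and derives the single operator identity $W_{r_1,2}^{-1}\widetilde{Q}_{r_1}W_{r_1,1}=(W_{r_3,2}^{-1}\widetilde{Q}_{r_3}W_{r_3,1})(W_{r_2,2}^{-1}\widetilde{Q}_{r_2}W_{r_2,1})$ implied by the three triples; that is, it treats the whole sandwich $W_{r,2}^{-1}\widetilde{Q}_{r}W_{r,1}$ as the effective transformation of relation $r$, never asks the composed map to be a bare rotation, and stops there, leaving the realizability of the identity implicit in the usual style of such ``can model'' claims. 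You instead fix the composed relation's projections up front ($A_{r_3,1}=A_{r_1,1}$, $A_{r_3,2}=A_{r_2,2}$), which is precisely what manufactures your $\boldsymbol{\rm SO}(k)$ obstruction, and you then dissolve it via the compatibility condition $A_{r_2,1}=A_{r_1,2}$ together with the surjectivity of ${\rm Rot\mbox{-}H}$ onto $\boldsymbol{\rm SO}(k)$ (Theorem~\ref{thm-1}) to realize $R_{r_3}=R_{r_2}R_{r_1}$. What your route buys is a genuinely constructive existence proof: you exhibit explicit parameters witnessing the pattern, which the paper never does, and you are the only one of the two to confront the solvability question head-on. The price is a restriction to a compatible subfamily of projections; in your cleanest instance ($\tau\equiv 0$) the model collapses to HousE-r and Claim~\ref{claim-3}, which proves the claim but undersells what HousE adds. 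What the paper's version buys is the unconstrained characterization: because $r_1$'s own projections remain free, the non-orthogonal factor $A_{r_2,1}A_{r_1,2}^{-1}$ that you had to cancel can instead be absorbed into the composed relation's projection matrices, so no coupling between the two premise relations is needed. One small caveat on your write-up: demanding the operator identity $R_{r_3}=R_{r_2}A_{r_2,1}A_{r_1,2}^{-1}R_{r_1}$ is sufficient but stronger than necessary, since the equation need only hold on the embedding vectors actually involved; this is harmless for an existence claim, but worth stating explicitly.
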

\begin{claim}
\label{claim-8}
HousE can model the relation mapping properties. (See proof in Appendix \ref{proof-claim-8})
\end{claim}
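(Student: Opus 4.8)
The plan is to show that the relational Householder projections supply exactly the extra degree of freedom needed to break the \emph{indistinguishable representations} obstruction that defeated HousE-r, while the rotations (which are preserved inside HousE) continue to guarantee the relation patterns. Recall the deductions at the end of Section~\ref{subsec: rotation}: for a 1-to-N relation, pure rotations force $S_{t_1}[i]=S_{t_2}[i]$ whenever $(h,r,t_1)$ and $(h,r,t_2)$ both hold, and symmetrically for N-to-1. So the whole content of Claim~\ref{claim-8} is to exhibit, for each of the four RMPs (1-to-1, 1-to-N, N-to-1, N-to-N), a concrete choice of parameters $\{U_r, P_{r,1}, P_{r,2}, T_{r,1}, T_{r,2}\}$ under which the model can \emph{simultaneously} satisfy several triples sharing a head (or tail) even when the tail (or head) entity embeddings are distinct. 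I would prove this constructively and row-wise, since the distance function in Equation~(\ref{dist-func}) decomposes as a sum over the $d$ rows and each row is handled independently.

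First I would fix attention on a single row index $i$ and drop it from the notation. The defining condition for $(h,r,t)$ to hold exactly (ideal no-error case, as in the HousE-r discussion) is
\begin{equation}
{\rm Rot\mbox{-}H}(U_r)\,{\rm Pro\mbox{-}H}(P_{r,1},T_{r,1})\,S_h \;=\; {\rm Pro\mbox{-}H}(P_{r,2},T_{r,2})\,S_t. \label{eq:holdcond}
\end{equation}
The key structural observation I would exploit is that the modified Householder matrix $M(p,\tau)=I-\tau pp^\top$ can send a \emph{whole one-parameter family} of distinct input vectors to a common image: any two vectors differing by a multiple of $p$ that are placed appropriately along the axis can be made to collapse, because $M(p,\tau)$ contracts the $p$-direction by the factor $1-\tau$ while fixing the orthogonal complement. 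Concretely, for the N-to-1 case $(h_1,r,t),(h_2,r,t)$, I would choose the head-projection axis $p$ aligned with $S_{h_1}-S_{h_2}$ and pick $\tau$ so that ${\rm Pro\mbox{-}H}(P_{r,1},T_{r,1})S_{h_1}={\rm Pro\mbox{-}H}(P_{r,1},T_{r,1})S_{h_2}$ even though $S_{h_1}\neq S_{h_2}$; the shared rotation then maps this common projected head to the single projected tail, and Equation~(\ref{eq:holdcond}) holds for both triples. The 1-to-N case is the mirror image, using the tail projection $\{P_{r,2},T_{r,2}\}$ to collapse two distinct tails $S_{t_1},S_{t_2}$; N-to-N is obtained by using both projections at once; and 1-to-1 is the trivial case that already follows from HousE-r. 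I would note that invertibility ($\tau\neq1$) is what keeps these projections compatible with the pattern claims, so the construction must stay in the regime $\tau\neq1$ (using multiple factors $m\ge1$ if a single contraction is too rigid).

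The main obstacle I anticipate is not any single RMP in isolation but arguing that introducing the projections does \emph{not} silently destroy the already-established relation-pattern claims (Claims~\ref{claim-4}--\ref{claim-6}); however, since those claims are proved separately and the projections are invertible, I would keep the RMP construction self-contained and only verify that the chosen $\tau$ values are $\neq 1$. A secondary subtlety is the requirement that the collapsing projection genuinely reduces distance in the relevant direction rather than merely permitting equality in the idealized case — i.e.\ that the negative samples (the triangles in Figure~\ref{Fig-2d}) can be pushed \emph{away} while the positive tails are pulled \emph{together}. I would address this by remarking that $\tau$ ranges over all reals, so along a single axis one can simultaneously contract the positive differences (taking $1-\tau$ small) and, on a second axis via a second factor of ${\rm Pro\mbox{-}H}$, expand the head-to-negative differences (taking $|1-\tau|$ large); this is precisely the flexibility that distinguishes HousE from HousE-r. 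Thus the full proof is a four-part case analysis, each part an explicit parameter construction verifying Equation~(\ref{eq:holdcond}) for the relevant family of triples, with the distance decomposition reducing everything to the one-row computation above. The detailed parameter choices and the verification of the inequality conditions for negative samples are deferred to Appendix~\ref{proof-claim-8}.
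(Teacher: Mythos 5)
Your overall strategy --- align the projection axis with $S_{h_1}-S_{h_2}$ and use the scalar $\tau$ to shrink the difference in that direction --- is the same idea the paper uses in Appendix~\ref{proof-claim-8}, but your central step is internally inconsistent. You require the exact collapse ${\rm Pro\mbox{-}H}(P_{r,1},T_{r,1})S_{h_1}={\rm Pro\mbox{-}H}(P_{r,1},T_{r,1})S_{h_2}$ with $S_{h_1}\neq S_{h_2}$ while simultaneously insisting that every factor satisfies $\tau\neq 1$ so that the projection stays invertible. These two demands are mutually exclusive: an invertible linear map is injective, so it cannot identify two distinct vectors, and no number of factors $m\ge 1$ helps since a product of invertible matrices is invertible. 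Concretely, with $p$ the unit vector along $v=S_{h_1}-S_{h_2}$ one gets $M(p,\tau)v=(1-\tau)v$, which vanishes only when $\tau=1$ --- exactly the singular (TransH-like) regime the paper excludes and that would break your own Claims~\ref{claim-4}--\ref{claim-6}, whose proofs use $W_{r,1}^{-1}$ and $W_{r,2}^{-1}$.

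The paper resolves this by \emph{not} asking for exact satisfaction of both triples. It computes the projected distance
$\hat{s}^2 = s^2 + (\tau_r^2-2\tau_r)\,s^2\cos^2\theta_{s,p_r}$
and observes that for $0<\tau_r<2$ (excluding $\tau_r=1$) the projection contracts the separation along $p_r$ by the factor $|1-\tau_r|$, which can be made arbitrarily small but nonzero; for $\tau_r<0$ or $\tau_r>2$ it expands it. Thus the $r$-specific representations $S_{h_1,r}$ and $S_{h_2,r}$ can be made arbitrarily similar (so both triples score well under the distance function) while the original embeddings remain distinguishable and the map remains invertible. To repair your argument you should replace "collapse to a common image" with "contract the difference to within any prescribed tolerance," i.e., trade exact equality in your Equation for an approximate version, which is the only version compatible with invertibility. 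Your remarks about handling 1-to-N symmetrically, about using a second axis to push negatives away, and about the row-wise decomposition are all fine and consistent with the paper once this correction is made.
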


\textbf{Connections to TransH, TransR and TransD.} Previous works such as TransH, TransR and TransD also focus on designing the projection operations to ensure that the same entity has different representations under different relations. 
However, as shown in Table \ref{ability}, these methods will undermine the ability to infer inversion and composition patterns due to the irreversible projection operations. Note that, the projection operation of TransH is a special case of HousE if we set the scalar $\tau = 1$, which essentially is the irreversible transformation. 
Different from these works, HousE utilizes an invertible matrix derived by a series of modified Householder matrices to generate relation-specific entity representations. Such invertible projections enable our proposal to model relation mapping properties without sacrificing the capability in modeling relation patterns. 

\section{Experiment}

\subsection{Experimental Setup}
\textbf{Datasets.} We evaluate our proposals on five widely-used benchmarks:  WN18~\cite{TransE}, FB15k~\cite{TransE}, WN18RR~\cite{dettmers2018convolutional}, FB15k-237~\cite{toutanova2015observed} and YAGO3-10~\cite{mahdisoltani2014yago3}. 
Refer to Appendix \ref{datasets app} for more details.

\textbf{Baselines.} We compare our models with a number of  baselines. For non-rotation models, we report TransE~\cite{TransE}, DistMult~\cite{yang2014embedding}, ComplEx~\cite{trouillon2016complex} and ConvE~\cite{dettmers2018convolutional}. For rotation-based models, we report RotatE~\cite{sun2019rotate}, Rotate3D~\cite{gao2020rotate3d}, QuatE~\cite{zhang2019quaternion}, DualE~\cite{cao2021dual} and Rot-Pro~\cite{Rot-Pro}. 

\textbf{Implementation details.} 
To ensure fair comparisons, we set a smaller embedding size $d$ for HousE-r and HousE, so that the total numbers of parameters are comparable to  baselines. More details can be found in Appendix \ref{Imp details}.



\subsection{Main Results}
The experimental results are summarized in Table \ref{WN18-and-FB15k} and Table \ref{WN18RR-and-FB15k237}. Compared to all the baselines, both HousE-r and HousE achieve SOTA performance, demonstrating the effectiveness of the Householder framework.

Table \ref{WN18-and-FB15k} shows the results on WN18 and FB15k, from which we observe that even with only Householder rotations, HousE-r already consistently outperforms the baselines over both datasets. 
Moreover, by combining Householder rotations and Householder projections together, HousE further  
achieves new state-of-the-art results on both WN18 and FB15k datasets. 
Considering that the main relation patterns in WN18 and FB15k are symmetry, antisymmetry and inversion, the superior performance of HousE-r and HousE reveals their effectiveness in modeling these patterns.


Table \ref{WN18RR-and-FB15k237} summarizes the results on WN18RR, FB15k-237 and YAGO3-10.
On these datasets, HousE-r surpasses most of the baselines. The only comparable exception is Rotate3D on FB15k-237 which models relations as 3-d rotations. 
However, HousE-r uses much less parameters than Rotate3D as shown in Appendix \ref{Imp details} and achieves similar performance, which also verifies the superior modeling capacity of Householder rotations.
The improvements over existing rotations-based baselines (RotatE, Rotate3D, QuatE and DualE) demonstrate the superiority of  high-dimensional rotations. 
Moreover, HousE consistently outperforms HousE-r along with all the baselines by a large margin on the three datasets across all metrics, benefiting from the ability to model relational mapping properties.


\begin{table}[t]
\caption{MRR for the models tested on each relation of WN18RR.}
\label{Case-WN18RR}
\vspace{-2.5mm}
\begin{center}
\begin{small}
\resizebox{\columnwidth}{!}{
\begin{tabular}{lccccc}
\toprule
Relation Name   & RotatE  & QuatE  & HousE-r  & HousE  \\
\midrule
hypernym         & 0.154 & 0.172 & {\ul 0.182}    & \textbf{0.207} \\
instance\_hypernym   & 0.324 & 0.362 & {\ul 0.395}    & \textbf{0.440} \\
member\_meronym      & 0.255 & 0.236 & {\ul 0.275}    & \textbf{0.312} \\
synset\_domain\_topic\_of    & 0.334 & 0.395 & {\ul 0.396}    & \textbf{0.428} \\
has\_part            & 0.205 & 0.210 & {\ul 0.217}    & \textbf{0.232} \\
member\_of\_domain\_usage     & 0.277 & 0.372 & {\ul 0.415}    & \textbf{0.453} \\
member\_of\_domain\_region    & 0.243 & 0.140 & {\ul 0.281}    & \textbf{0.395} \\
derivationally\_related\_form  & 0.957 & 0.952 & {\ul 0.958} & \textbf{0.958}    \\
also\_see           & 0.627 & 0.607 & {\ul 0.638} & \textbf{0.640}    \\
verb\_group                  & 0.968 & 0.930 & {\ul 0.968}    & \textbf{0.968} \\
similar\_to             & 1.000 & 1.000 & {\ul 1.000}    & \textbf{1.000} \\
\bottomrule
\end{tabular}}
\end{small}
\end{center}
\vspace{-4mm}
\end{table}
\subsection{Fine-grained Performance Analysis}
To further verify the modeling capacity of our proposal from a fine-grained perspective, we report the performance on each relation of WN18RR following  \cite{zhang2019quaternion}. As shown in Table \ref{Case-WN18RR}, 
compared to two rotation-based baselines RotatE and QuatE, we observe that:

(1) HousE-r surpasses all the baselines on all 11 relation types, confirming the superior modeling capacity of the Householder rotations.

(2) By incorporating the Householder projections, HousE achieves more significant improvements on the challenging 1-to-N and N-to-1 relations. For example, HousE outperforms RotatE on 1-to-N relation  \normalem{\emph{member\_of\_domain\_region}}  and N-to-1 relation  \normalem{\emph{instance\_hypernym}} with 62.55\% and 35.80\% relative gains, respectively.

\subsection{Capability of Modeling RMPs}
In order to further demonstrate the effectiveness of HousE in modeling RMPs, we report the detailed results of our proposal on different RMPs\footnote[2]{Following \cite{sun2019rotate}, for each relation $r$, we compute the average number of heads per tail ($hpt_r$) and the average number of tails per head $(tph_r)$. If $hpt_r\textless 1.5$ and $tph_r\textless 1.5$, $r$ is treated as 1-to-1; if $hpt_r\ge 1.5$ and $tph_r\ge 1.5$, $r$ is treated as N-to-N; if $hpt_r\textless 1.5$ and $tph_r\ge 1.5$, $r$ is treated as 1-to-N; if $hpt_r\ge 1.5$ and $tph_r\textless 1.5$, $r$ is treated as N-to-1.} in FB15k-237.

Table \ref{RMPs} exhibits the results on different types of RMPs. 
One can see that HousE outperforms RotatE across all RMP types.
For example, on the challenging N-to-1 (predicting head) and 1-to-N (predicting tail) tasks, HousE achieves 29.55\% and 21.13\% relative improvements over RotatE.
Such advanced performance of HousE owes to the powerful modeling capability of the Householder projections. 

\begin{table}[t]
\small
\caption{MRR for the models tested on RMPs in FB15k-237.}
\label{RMPs}
\vskip 0.14in
\begin{center}
\resizebox{0.7\columnwidth}{!}{
\begin{tabular}{c|l|cc}
\toprule
Task                             & RMPs & \multicolumn{1}{l}{RotatE}              & HousE \\
\midrule
\multirow{4}{*}{\makecell[c]{Predicting\\Head\\(MRR)}} & 1-to-1                        & 0.498                                 & \textbf{0.514} \\
                                 & 1-to-N                        & 0.475                                       &   \textbf{0.479} \\
                                 & N-to-1                       & 0.088                                      & \textbf{0.114} \\
                                 & N-to-N                       & 0.260                                        & \textbf{0.286} \\
\midrule
\multirow{4}{*}{\makecell[c]{Predicting\\Tail\\(MRR)}} & 1-to-1                         & 0.490                       & \textbf{0.502} \\
                                 & 1-to-N                        & 0.071                          & \textbf{0.086} \\
                                 & N-to-1                        & 0.747                       & \textbf{0.778} \\
                                 & N-to-N                        & 0.367                       & \textbf{0.392} \\
\bottomrule
\end{tabular}}
\end{center}
\vspace{-4mm}
\end{table}






\subsection{Hyperparameter Sensitivity Analysis}

\textbf{Dimension of rotations.}
To verify the expressiveness of high-dimensional rotations, we conduct experiments for our models under varying rotation dimension $k$. Figure \ref{Fig-3a} and \ref{Fig-3b} show the results on WN18RR and FB15k-237. 

As expected, on both datasets, HousE-r and HousE rotated in higher-dimensional spaces achieve better performance than the ones rotated in lower-dimensional spaces, since the high-dimensional rotations bring the superior modeling capacity.
Moreover, HousE consistently surpasses HousE-r by a large margin across all rotation dimensions, demonstrating the effectiveness of the integrated Householder projections. For example, on WN18RR, HousE with 4-dimensional rotations already outperforms HousE-r with 12-dimensional rotations.


\textbf{Number of modified Householder matrices.}
As shown in Equation (\ref{map-Hpro}), a Householder projection is composed of $m$ modified Householder matrices.
Here we investigate the impact of $m$ on the performance (MRR) of HousE.
Figure \ref{Fig-3c} and \ref{Fig-3d} show the results on WN18RR and FB15k-237. 

With the increase of $m$, the performance of HousE first improves and then drops on both datasets. 
This is because the larger $m$ provides greater projection capability, but the overcomplicated projections also    aggravate the risk of overfitting.  
Moreover, the values of $m$ for the best performance on the two datasets are different ($m=1$ on WN18RR and $m=6$ on FB15k-237) due to the distinct graph densities.
Specifically, WN18RR is a sparse KG dataset with the average degree of $2.19$, while FB15k-237 is a much denser KG with the average degree of $18.71$. Thus, the larger $m$ is needed for modeling the richer graph information in FB15k-237.

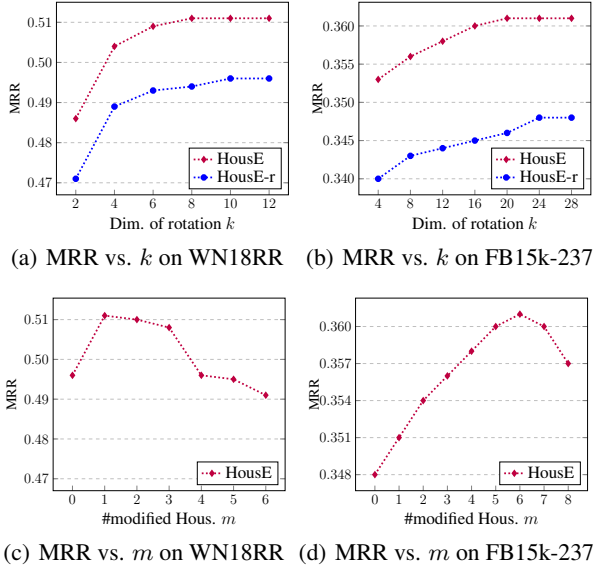
\begin{figure}[t!]
\centering
\subfigure[MRR vs. $k$ on WN18RR]{\label{Fig-3a}
            \begin{tikzpicture}[font=\Large, scale=0.45]
                \begin{axis}[
                    legend cell align={left},
                    legend style={nodes={scale=1.0, transform shape}},
                    xlabel={Dim. of rotation $k$},
                    xtick pos=left,
                    tick label style={font=\large},
                    ylabel style={font=\large},
                    ylabel={MRR},
                    xtick={2, 4, 6, 8, 10, 12},
                    xticklabels={$2$, $4$, $6$, $8$, $10$, $12$},
                    ytick={0.46, 0.47,0.48,0.49,0.50,0.51, 0.52},
                    yticklabels={$0.46$, $0.47$,$0.48$,$0.49$,$0.50$,$0.51$, $0.52$},
                    legend pos=south east,
                    ymajorgrids=true,
                    grid style=dashed
                ]
                \addplot[
                    color=purple,
                    dotted,
                    mark options={solid},
                    mark=diamond*,
                    line width=1.5pt,
                    mark size=2pt
                    ]
                    coordinates {
                    (2, 0.486)
                    (4, 0.504)
                    (6, 0.509)
                    (8, 0.511)
                    (10, 0.511)
                    (12, 0.511)
                    };
                    \addlegendentry{HousE}
                \addplot[
                    color=blue,
                    dotted,
                    mark options={solid},
                    mark=*,
                    line width=1.5pt,
                    mark size=2pt
                    ]
                    coordinates {
                    (2, 0.471)
                    (4, 0.489)
                    (6, 0.493)
                    (8, 0.494)
                    (10, 0.496)
                    (12, 0.496)
                    };
                    \addlegendentry{HousE-r}
                \end{axis}
                \end{tikzpicture}
    }
    \subfigure[MRR vs. $k$ on FB15k-237]{\label{Fig-3b}
            \begin{tikzpicture}[font=\Large,scale=0.45]
                \begin{axis}[
                    legend cell align={left},
                    legend style={nodes={scale=1.0, transform shape}},
                    xlabel={Dim. of rotation $k$},
                    xtick pos=left,
                    tick label style={font=\large},
                    ylabel style={font=\large},
                    ylabel={MRR},
                    xtick={4, 8, 12, 16, 20, 24, 28},
                    xticklabels={$4$, $8$, $12$, $16$, $20$, $24$,$28$},
                    ytick={0.340, 0.345,0.350,0.355,0.360},
                    yticklabels={$0.340$, $0.345$, $0.350$,$0.355$,$0.360$},
                    legend pos=south east,
                    ymajorgrids=true,
                    grid style=dashed
                ]
                \addplot[
                    color=purple,
                    dotted,
                    mark options={solid},
                    mark=diamond*,
                    line width=1.5pt,
                    mark size=2pt
                    ]
                    coordinates {
                    (4, 0.353)
                    (8, 0.356)
                    (12, 0.358)
                    (16, 0.360)
                    (20, 0.361)
                    (24, 0.361)
                    (28, 0.361)
                    };
                    \addlegendentry{HousE}
                \addplot[
                    color=blue,
                    dotted,
                    mark options={solid},
                    mark=*,
                    line width=1.5pt,
                    mark size=2pt
                    ]
                    coordinates {
                    (4, 0.340)
                    (8, 0.343)
                    (12, 0.344)
                    (16, 0.345)
                    (20, 0.346)
                    (24, 0.348)
                    (28, 0.348)
                    };
                    \addlegendentry{HousE-r}
                \end{axis}
                \end{tikzpicture}
    }
    
    \subfigure[MRR vs. $m$ on WN18RR]{\label{Fig-3c}
            \begin{tikzpicture}[font=\Large,scale=0.45]
                \begin{axis}[
                    legend cell align={left},
                    legend style={nodes={scale=1.0, transform shape}},
                    xlabel={\#modified Hous. $m$},
                    xtick pos=left,
                    tick label style={font=\large},
                    ylabel style={font=\large},
                    ylabel={MRR},
                    ymin=0.467,
                    xtick={0, 1, 2, 3, 4, 5, 6},
                    xticklabels={$0$, $1$, $2$, $3$, $4$, $5$, $6$},
                    ytick={0.46, 0.47,0.48,0.49,0.50,0.51, 0.52},
                    yticklabels={$0.46$, $0.47$,$0.48$,$0.49$,$0.50$,$0.51$, $0.52$},
                    legend pos=south east,
                    ymajorgrids=true,
                    grid style=dashed
                ]
                \addplot[
                    color=purple,
                    dotted,
                    mark options={solid},
                    mark=diamond*,
                    line width=1.5pt,
                    mark size=2pt
                    ]
                    coordinates {
                    (0, 0.496)
                    (1, 0.511)
                    (2, 0.510)
                    (3, 0.508)
                    (4, 0.496)
                    (5, 0.495)
                    (6, 0.491)
                    };
                    \addlegendentry{HousE}
                \end{axis}
                \end{tikzpicture}
    }
    \subfigure[MRR vs. $m$ on FB15k-237]{\label{Fig-3d}
            \begin{tikzpicture}[font=\Large,scale=0.45]
                \begin{axis}[
                    legend cell align={left},
                    legend style={nodes={scale=1.0, transform shape}},
                    xlabel={\#modified Hous. $m$},
                    xtick pos=left,
                    tick label style={font=\large},
                    ylabel style={font=\large},
                    ylabel={MRR},
                    xtick={0, 1, 2, 3, 4, 5, 6, 7, 8},
                    xticklabels={$0$, $1$, $2$, $3$, $4$, $5$, $6$, $7$,$8$},
                    ytick={0.348, 0.351, 0.354, 0.357, 0.360},
                    yticklabels={$0.348$, $0.351$, $0.354$, $0.357$,$0.360$},
                    legend pos=south east,
                    ymajorgrids=true,
                    grid style=dashed
                ]
                \addplot[
                    color=purple,
                    dotted,
                    mark options={solid},
                    mark=diamond*,
                    line width=1.5pt,
                    mark size=2pt
                    ]
                    coordinates {
                    (0, 0.348)
                    (1, 0.351)
                    (2, 0.354)
                    (3, 0.356)
                    (4, 0.358)
                    (5, 0.360)
                    (6, 0.361)
                    (7, 0.360)
                    (8, 0.357)
                    };
                    \addlegendentry{HousE}
                \end{axis}
                \end{tikzpicture}
    }
    \vspace{-1mm}
    \caption{(a) and (b) show the MRR results of HousE and HousE-r with varying rotation dimensions on WN18RR and FB15k-237; (c) and (d) show the MRR results of HousE with different numbers of modified Householder matrices on WN18RR and FB15k-237.}
    \vspace{-2mm}
\end{figure}

\subsection{Superiority of Householder Projections}
To verify the effectiveness of the proposed Householder projections, we design two variants of HousE by replacing the Householder projections with previous irreversible projections used in TransH~\cite{wang2014knowledge} and TransR~\cite{lin2015learning}, dubbed HousH and HousR respectively.
Table \ref{Variants} shows the experimental results on WN18RR and FB15k-237. Compared to HousE-r without any projections, the performance of HousH and HousR is barely improved on FB15k-237, and even degraded on WN18RR.
It reveals that the irreversible projections may hinder the modeling capability.
Moreover, HousE significantly outperforms HousH and HousR on both datasets, demonstrating the superiority of the invertible Householder projections in HousE.

\subsection{Additional Translations}
\label{translation plus} 
To explore the potential of our proposal, we also incorporate translations \cite{TransE} into HousE-r and HousE, dubbed HousE-r$^+$ and HousE$^+$ respectively. 
The translations are directly deployed after the Householder rotations.  
From Table \ref{Variants}, we see that these two variants both outperform their original versions. 
This is because the translations provide a natural way to represent the hierarchical property of KGs \cite{TransE}, which also endows our proposal with more comprehensive modeling capacity. 

\begin{table}[t]
\caption{Performance of different variants.}
\label{Variants}
\vskip 0.1in
\begin{center}
\begin{small}
\resizebox{0.75\columnwidth}{!}{
\begin{tabular}{lcccc}
\toprule
         & \multicolumn{2}{c}{WN18RR} & \multicolumn{2}{c}{FB15k-237} \\ \cmidrule(r){2-3} \cmidrule(r){4-5}
Variants & MRR          & H@10        & MRR           & H@10          \\ 
\midrule
HousH    & .491         & .584        & .347              &     .537          \\
HousR    & .488         & .580        & .349              &     .538          \\
\midrule
HousE-r  & .496         & .585        & .348              & .534               \\
HousE-r$^+$ & .500         & .591        & .351              & .538              \\
\midrule
HousE    & .511         & .602        & .361              & .551              \\
HousE$^+$   & .514         & .606        & .366              & .552              \\
\bottomrule
\end{tabular}}
\end{small}
\end{center}
\vspace{-5mm}
\end{table}


\section{Related Work}


\textbf{Translation-based models.} TransE \cite{TransE} is the first model that represents each relation as a translation between entities.
This simple model is effective in modeling antisymmetry, inversion and composition patterns, but fails in handling symmetry pattern and RMPs. 
To tackle TransE's limitations, a set of variants \cite{wang2014knowledge, lin2015learning, ji2015knowledge, xiao2015transa} are subsequently proposed. 
TransH \cite{wang2014knowledge} projects entities to a relation-specific hyperplane and performs translation on this hyperplane. TransR \cite{lin2015learning} models entities and relations in distinct spaces and conducts relation-specific projections with normal linear transformations. 
However, these models lose the ability to model inversion and composition patterns since irreversible linear transformations are performed on head and tail entities \cite{sun2019rotate}. 

\textbf{Rotation-based models.} Following ComplEx \cite{trouillon2016complex} which extends DistMult \cite{yang2014embedding} to complex number systems, RotatE \cite{sun2019rotate} represents each relation as a 2-dimensional rotation in complex plane to model symmetry, antisymmetry, inversion and composition patterns.
Rotate3D \cite{gao2020rotate3d} and QuatE \cite{zhang2019quaternion} extend the rotations to 3-dimensional and 4-dimensional spaces by introducing the quaternion number system. 
Recently, DualE \cite{cao2021dual} utilizes dual quaternions to combine translations and rotations in 3-d space for modeling multiple relations. 




\textbf{Neural-network-based models.} There are also some models using neural networks for KGE. R-GCN \cite{R-GCN} introduces graph neural networks as the graph encoders. ConvE \cite{dettmers2018convolutional} exploits convolution operations to facilitate the score calculation. 
However, such methods lack of explicit geometrical explanations on modeling relation patterns and RMPs. 

\section{Conclusion}

In this paper, we propose HousE, a novel KGE framework based on Householder parameterization.
HousE models relations as high-dimensional Householder rotations to capture crucial relation patterns. Moreover, with Householder projections, HousE generates relation-specific embeddings for each entity to model RMPs. Experimental results on five datasets   demonstrate the superiority of our proposal. 





\section*{Acknowledgements}
This work is supported in part by the National Key Research and Development Program of China (no. 2021ZD0112400), and also in part by the National Natural Science Foundation of China (no. U1811463).

\bibliography{example_paper}
\bibliographystyle{icml2022}

\newpage
\appendix
\onecolumn
\section{Notations}
\label{notations}
\begin{table}[h]
\caption{Notations used in this paper.}
\label{params}
\vspace{0.15in}
\begin{center}
\begin{small}
\renewcommand{\arraystretch}{1.5}
\begin{tabular}{ccl}
\toprule
Symbol     & Shape & Description  \\
\midrule
$\mathcal{E}$ & - & Set of entities \\
$\mathcal{R}$ & - &  Set of relations \\
$\mathcal{D}$ & - &  Set of factual triples \\
$h, t$  & -  & Head entity and tail entity \\
$r$     & -    & Relation type \\
$d$     &  $\mathbb{R}$      &  Embedding size\\
$k$     &  $\mathbb{R}$      &  Rotation dimension\\
$m$   &   $\mathbb{R}$      &  Number of modified Householder matrices\\
$S_e$  & $\mathbb{R}^{d \times k}$    & Representation of entity $e\in \mathcal{E}$ \\
$S_{e,r}$ & $\mathbb{R}^{d \times k}$ & $r$-specific representation of entity $e\in \mathcal{E}$ \\
$U_r$     &  $\mathbb{R}^{d \times 2\lfloor \frac{k}{2}\rfloor \times k}$    & Param. of $r\in \mathcal{R}$ for Householder rotation \\
$P_r$   &  $\mathbb{R}^{d \times m \times k}$   & Param. of $r\in \mathcal{R}$ for Householder projection axes\\
$T_r$  & $\mathbb{R}^{d \times m}$  & Param. of $r\in \mathcal{R}$ for Householder projection scalars\\

\bottomrule
\end{tabular}
\end{small}
\end{center}
\vskip -0.1in
\end{table}


\section{Proofs of Theorem \ref{thm-1}}
\label{proof}
\subsection{Proof of Lemma \ref{auxi}}
In order to prove Theorem \ref{thm-1}, we first prove an auxiliary Lemma \ref{auxi}.
\begin{lemma}
\label{auxi}
Any $k \times k$ orthogonal matrix $Q$ can be decomposed into the product of $k-1$ or $k$ Householder matrices.
\end{lemma}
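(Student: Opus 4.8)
The plan is to prove Lemma~\ref{auxi} by induction on the dimension $k$, peeling off one Householder reflection at a time so as to fix one additional coordinate direction at each stage; this is the constructive, Householder-specific form of the Cartan--Dieudonn\'e theorem. The single geometric fact I would establish first is the workhorse of the whole argument: for any two distinct unit vectors $a,b\in\mathbb{R}^k$, the Householder matrix $H(u)$ of Equation~(\ref{House}) with $u=(a-b)/\Vert a-b\Vert_2$ satisfies $H(u)\,a=b$. This is a one-line computation from $H(u)=I-2uu^\top$ together with $\Vert a\Vert_2=\Vert b\Vert_2=1$, which forces $\braket{a-b,a}=\tfrac12\Vert a-b\Vert_2^2$ and hence $2\braket{a,u}u=a-b$.

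With this tool in hand, the induction runs as follows. For the base case $k=1$, an orthogonal matrix is $\pm 1$: the value $+1$ is the empty product (or a padded pair, see below) and $-1=H(1)$ is a single reflection. For the inductive step, let $Q$ be $k\times k$ orthogonal and let $q_1=Qe_1$ be its first column, a unit vector. If $q_1\neq e_1$, take the reflection $H_1$ above with $H_1q_1=e_1$, so that $H_1Q$ fixes $e_1$; if $q_1=e_1$ already, put $H_1=I$ and use no reflection at this stage. Since $H_1Q$ is orthogonal and fixes $e_1$, it preserves the orthogonal complement $e_1^{\perp}\cong\mathbb{R}^{k-1}$ and restricts there to a $(k-1)\times(k-1)$ orthogonal matrix $Q'$. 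By the induction hypothesis $Q'$ is a product of at most $k-1$ Householder reflections of $\mathbb{R}^{k-1}$, and each such reflection extends to a genuine $k\times k$ Householder matrix fixing $e_1$ (its unit normal lies in $e_1^{\perp}$). Composing these and using $H_1^{-1}=H_1$ expresses $Q$ as a product of at most $k$ Householder matrices.

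To sharpen ``at most $k$'' into the stated ``$k-1$ or $k$'', I would invoke a parity/padding argument. Because $\det H(u)=-1$, a product of $j$ Householder matrices has determinant $(-1)^j$, so the parity of $j$ is pinned down by $\det Q\in\{+1,-1\}$, and exactly one of $k-1,k$ has the matching parity. Using $H(u)^2=I$ I can insert trivial pairs into any decomposition without changing its product, raising the number of factors to the largest integer of the correct parity not exceeding $k$, namely $k-1$ or $k$. The step I expect to be the main obstacle is precisely this bookkeeping: verifying rigorously that $Q'$ is orthogonal on $e_1^{\perp}$, that subspace reflections lift to bona fide $k\times k$ Householder matrices, and that the edge case $q_1=e_1$ (where a stage contributes no reflection) is correctly absorbed so that the final count lands on exactly $k-1$ or $k$ rather than merely on an upper bound.
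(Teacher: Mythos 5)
Your proof is correct, but it takes a different route from the paper's. The paper invokes the Householder QR decomposition as a black box: applying $k-1$ Householder reflections to $Q$ produces an upper triangular matrix $R$ that is also orthogonal, hence diagonal with its first $k-1$ entries $+1$ and last entry $\pm 1$; if the last entry is $-1$, one extra reflection $H(e_k)$ finishes the job, so the count of exactly $k-1$ or $k$ falls out with no further bookkeeping. You instead run the Cartan--Dieudonn\'e-style induction directly --- send the first column to $e_1$ with one reflection, restrict to $e_1^{\perp}$, and lift the $(k-1)$-dimensional reflections back up --- which is essentially the proof of Householder QR unpacked, and has the advantage of being fully self-contained. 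The price is that the induction only delivers ``at most $k$'' reflections, so you need the determinant-parity observation together with padding by $H(u)^2=I$ to land on exactly $k-1$ or $k$; that step is sound, since the count your induction produces automatically has the parity of $\det Q$, exactly one of $k-1$, $k$ matches that parity, and the deficit is therefore a nonnegative even number. The auxiliary facts you flag as potential obstacles all check out: $H(u)a=b$ for $u=(a-b)/\Vert a-b\Vert_2$ is the one-line computation you describe; an orthogonal map fixing $e_1$ restricts to an orthogonal map of $e_1^{\perp}$; and a Householder matrix $I_{k-1}-2vv^{\top}$ of $\mathbb{R}^{k-1}$ lifts to the genuine $k\times k$ Householder matrix $I_k-2\tilde{v}\tilde{v}^{\top}$ with $\tilde{v}=(0,v^{\top})^{\top}$. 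If anything, your version is more careful than the paper's about the degenerate stages where no reflection is needed, which the QR-based count glosses over.
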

\begin{proof}
From the Householder QR decomposition \cite{householder1958unitary}, we can upper triangularize any full-rank matrix $W \in \mathbb{R}^{k \times k}$ by using $k-1$ Householder reflections, i.e.,
\begin{equation}
    H(u_{k-1})H(u_{k-2})\cdots H(u_1)W=R,\nonumber
\end{equation}
where $R\in \mathbb{R}^{k \times k}$ is an upper triangular matrix and its first $n-1$ diagonal elements are all positive. 

When Household QR decomposition is performed on an orthogonal matrix $Q$, we can get: 
\begin{equation}
    H(u_{k-1})H(u_{k-2})\cdots H(u_1)Q=R.\nonumber
\end{equation}
Note that $R$ here is both upper triangular and orthogonal (i.e., $RR^T=I$) since it is a product of $k$ orthogonal matrices. 
It establishes that $R$ is a diagonal matrix, where the first $k-1$ diagonal entries are equal to $+1$ and the last diagonal entry is either +1 or -1. 


If the last diagonal entry of $R$ is equal to $+1$, we have
\begin{equation}
    H(u_{k-1})H(u_{k-2})\cdots H(u_1)Q=I.\nonumber
\end{equation}
As each Householder matrix $H(u_i)$ is its own inverse, we obtain that
\begin{equation}
\label{k-1}
    Q=H(u_1)\cdots H(u_{k-1}).
\end{equation}

If the last diagonal entry of $R$ is equal to $-1$, we can set $u_k=e_k=(0,\ldots,0,1)^\top \in \mathbb{R}^{k}$ and consequently get 
\begin{equation}
    H(u_k)R=H(u_k)H(u_{k-1})\cdots H(u_1)Q=I.\nonumber
\end{equation}
Since $H(u_i)$ is its own inverse, we also obtain that
\begin{equation}
\label{k}
    Q=H(u_1)\cdots H(u_{k-1})H(u_{k}).
\end{equation}
From Equation (\ref{k-1}) and (\ref{k}), we can see that any $k \times k$ orthogonal matrix can be decomposed into the product of $k-1$ or $k$ Householder matrices.
\end{proof}
\subsection{Proof of Theorem \ref{thm-1}}
\begin{proof}
We first prove that when $n=\lfloor \frac{k}{2} \rfloor$, the image of ${\rm Rot\mbox{-}H}$ is a subset of $\boldsymbol{\rm SO}(k)$, i.e., ${\rm Rot\mbox{-}H}(U)\subset \boldsymbol{\rm SO}(k)$.
Note that each Householder matrix is symmetric and orthogonal and its determinant is $-1$.
Therefore, the product of $2n$ Householder matrices is an orthogonal matrix with determinant $+1$, i.e., a rotation matrix~\cite{artin2016geometric}, which means ${\rm Rot\mbox{-}H}(U)\subset \boldsymbol{\rm SO}(k)$.

Then we also prove that its converse is also valid, i.e., any $k \times k$ rotation matrix can be expressed as the product of $2\lfloor \frac{k}{2} \rfloor$ Householder matrices $H(u_{i})$.
Note that a rotation matrix $\widetilde{Q}$ is a special orthogonal matrix with determinant $+1$~\cite{artin2016geometric}, i.e. $\operatorname{det}(\widetilde{Q})=+1$, and thus  $\widetilde{Q}$ can be decomposed into the product of $k-1$ or $k$ Householder matrices based on  Lemma \ref{auxi}. Moreover, since $\operatorname{det}(H(u_i))=-1$ and the determinant of a product of matrices is the product of their determinants, we can naturally derive that any $k \times k$ rotation matrix can be decomposed into the product of $2\lfloor \frac{k}{2} \rfloor$ Householder matrices, i.e., $\boldsymbol{\rm SO}(k)\subset {\rm Rot\mbox{-}H}(U)$. All in all, we have ${\rm Rot\mbox{-}H}(U)=\boldsymbol{\rm SO}(k)$.
\end{proof}

\section{Definitions}
\label{definition}

\begin{definition}
A relation $r$ is \textbf{symmetric} (\textbf{antisymmetric}) if $\forall{x, y}$
\begin{equation}
    r(x,y)\Rightarrow r(y,x) \quad (r(x,y)\Rightarrow \neg r(y,x)). \nonumber
\end{equation}
A clause with such form is a \textbf{symmetry} (\textbf{antisymmetry}) pattern.
\end{definition}
 
\begin{definition}
A relation $r_1$ is \textbf{inverse} to relation $r_2$ if $\forall{x, y}$
\begin{equation}
    r_2(x,y)\Rightarrow r_1(y,x) \nonumber.               
\end{equation}
A clause with such form is an \textbf{inversion} pattern.
\end{definition}

\begin{definition}
A relation $r_1$ is \textbf{composed} of relation $r_2$ and relation $r_3$ if $\forall{x, y, z}$
\begin{equation}
    r_2(x,y)\land r_3(y,z)\Rightarrow r_1(x,z). \nonumber
\end{equation}
A clause with such form is a \textbf{composition} pattern.
\end{definition}

Following \cite{TransE}, there are four relation mapping properties:

\begin{definition}
A relation $r$ is \textbf{1-to-1} if a $head$ can appear with at most one $tail$.
\end{definition}

\begin{definition}
A relation $r$ is \textbf{1-to-N} if a $head$ can appear with many $tails$.
\end{definition}

\begin{definition}
A relation $r$ is \textbf{N-to-1} if many $heads$ can appear with the same $tail$.
\end{definition}

\begin{definition}
A relation $r$ is \textbf{N-to-N} if many $heads$ can appear with many $tails$.
\end{definition}

\section{Proofs of Claims}
We denote the $r$-specific Householder rotation matrix and Householder projection matrices as $\widetilde{Q}_r$ and $\{W_{r,1}, W_{r,2}\}$ respectively:
\begin{equation}
    \begin{split}
\widetilde{Q}_r&={\rm Rot\mbox{-}H}({U_r[i]}),\\
W_{r,1}&={\rm Pro\mbox{-}H}(P_{r,1}[i], T_{r,1}[i]),\\
W_{r,2}&={\rm Pro\mbox{-}H}(P_{r,2}[i], T_{r,2}[i]). \nonumber
    \end{split}
\end{equation}
For simplicity, we also omit the row indices $[i]$ of entity representations in the following proofs. 
\subsection{Proof of Claim \ref{claim-1}}
\label{proof-claim-1}
\begin{proof}
if $r(x,y)$ and $r(y,x)$ hold, we have
\begin{equation}
S_y=\widetilde{Q}_r S_x \land S_x=\widetilde{Q}_r S_y \Rightarrow \widetilde{Q}_r\widetilde{Q}_r=I\nonumber
\end{equation}
Otherwise, if $r(x,y)$ and $\neg r(y,x)$ hold, we have
\begin{equation}
S_y=\widetilde{Q}_r S_x \land S_x \neq \widetilde{Q}_r S_y \Rightarrow \widetilde{Q}_r\widetilde{Q}_r\neq I\nonumber
\end{equation}
\end{proof}

\subsection{Proof of Claim \ref{claim-2}}
\label{proof-claim-2}
\begin{proof}
if $r_1(x,y)$ and $r_2(y,x)$ hold, we have
\begin{equation}
S_y=\widetilde{Q}_{r_1} S_x \land S_x=\widetilde{Q}_{r_2} S_y \Rightarrow \widetilde{Q}_{r_1}=\widetilde{Q}_{r_2}^{T}\nonumber
\end{equation}
\end{proof}

\subsection{Proof of Claim \ref{claim-3}}
\label{proof-claim-3}
\begin{proof}
if $r_1(x,z), r_2(x,y)$ and $r_3(y,z)$ hold, we have
\begin{equation}
S_z=\widetilde{Q}_{r_1}S_x \land S_y=\widetilde{Q}_{r_2}S_x \land S_z=\widetilde{Q}_{r_3}S_y \Rightarrow \widetilde{Q}_{r_1}=\widetilde{Q}_{r_3}\widetilde{Q}_{r_2}\nonumber
\end{equation}
\end{proof}

\subsection{Proof of Claim \ref{claim-4}}
\label{proof-claim-4}
\begin{proof}
if $r(x,y)$ and $r(y,x)$ hold, we have
\begin{equation}
\begin{split}
W_{r,2}S_y=\widetilde{Q}_r W_{r,1} S_x \land W_{r,2}S_x=\widetilde{Q}_r W_{r,1}S_y\\
\Rightarrow (W_{r,2}^{-1}\widetilde{Q}_r W_{r,1})(W_{r,2}^{-1}\widetilde{Q}_r W_{r,1})=I\nonumber
\end{split}
\end{equation}
Otherwise, if $r(x,y)$ and $\neg r(y,x)$ hold, we have
\begin{equation}
\begin{split}
W_{r,2}S_y=\widetilde{Q}_r W_{r,1} S_x \land W_{r,2}S_x \neq \widetilde{Q}_r W_{r,1}S_y\\
\Rightarrow (W_{r,2}^{-1}\widetilde{Q}_r W_{r,1})(W_{r,2}^{-1}\widetilde{Q}_r W_{r,1}) \neq I\nonumber
\end{split}
\end{equation}
\end{proof}

\subsection{Proof of Claim \ref{claim-5}}
\label{proof-claim-5}
\begin{proof}
if $r_1(x,y)$ and $r_2(y,x)$ hold, we have
\begin{equation}
\begin{split}
W_{r_1,2}S_y=\widetilde{Q}_{r_1} W_{r_1,1} S_x \land W_{r_2,2}S_x=\widetilde{Q}_{r_2} W_{r_2,1}S_y\\
\Rightarrow W_{r_1,2}^{-1}\widetilde{Q}_{r_1} W_{r_1,1}=(W_{r_2,2}^{-1}\widetilde{Q}_{r_2} W_{r_2,1})^{-1}\nonumber
\end{split}
\end{equation}
\end{proof}

\subsection{Proof of Claim \ref{claim-6}}
\label{proof-claim-6}
\begin{proof}
if $r_1(x,z), r_2(x,y)$ and $r_3(y,z)$ hold, we have
\begin{equation}
\begin{split}
W_{r_1,2}S_z&=\widetilde{Q}_{r_1} W_{r_1,1}S_x\\
\land W_{r_2,2}S_y&=\widetilde{Q}_{r_2} W_{r_2,1}S_x\\
\land W_{r_3,2}S_z&=\widetilde{Q}_{r_3} W_{r_3,1}S_y\\
\Rightarrow W_{r_1,2}^{-1}\widetilde{Q}_{r_1} W_{r_1,1}&=(W_{r_3,2}^{-1}\widetilde{Q}_{r_3} W_{r_3,1})(W_{r_2,2}^{-1}\widetilde{Q}_{r_2} W_{r_2,1})\nonumber
\end{split}
\end{equation}
\end{proof}


\subsection{Proof of Claim \ref{claim-8}}
\label{proof-claim-8}
In order to model sophisticated RMPs, we expect to tackle the challenge of indistinguishable representations with Householder projections as mentioned in Section \ref{subsec: rotation}. 
For the N-to-1 relations, here we take a 2-to-1 relation $r$ as an example with two triples $(h_1,r,t)$ and $(h_2,r,t)$. Householder projections can adjust the relative distance between entity $h_1$ and $h_{2}$ according to relation $r$. 
Formally, the original distance between $h_1$ and $h_{2}$ is defined as:  $s= \Vert S_{h_1}-S_{h_2}\Vert_2$. After  applying a modified Householder matrix, the relative  distance between the projected representations is: 
\begin{equation}
    \begin{split}
        \hat{s}^2 = \Vert S_{h_1,r}-S_{h_2,r}\Vert_2^2 = s^2 + (\tau_r^2 - 2\tau_r)s^2\cos^2\theta_{s,p_r}. \nonumber
    \end{split}
\end{equation}
It is clear that the learnable $\tau_r$ determines the increase or decrease of the relative distance:
(1) when $0<\tau_r<2$, $\hat{s}\le s$; (2) when $\tau_r=0$ or $2$, $\hat{s}=s$; (3) when $\tau_r<0$ or $\tau>2$, $\hat{s}\ge s$. Moreover, the term $\cos\theta_{s,p_r}$ is determined by the relative positions between the entities and the projection axis $p_r$.
This reveals that the
Householder projections can adaptively change the relative distance between entities based on their positions.
With such projections, one can obtain similar $r$-specific representations $S_{h_1,r}$ and $S_{h_2,r}$ for $h_1$ and $h_2$ 
, while the original representations $S_{h_{1}}$ and $S_{h_{2}}$ can be still distinguishable.
The same is also true for 1-to-N relations.

\section{Discussion on Other Relation Patterns}
\label{discussion}
\subsection{Multiplicity}
The multiplicity pattern has been investigated in DualE~\cite{cao2021dual}. Formally, it has the following definition:
\begin{definition}
Relation $r_1, r_2, \ldots, r_N$ are multiple if $\forall{i\in \{1,\ldots,N\}}$, $(h,r_i,t)$ can hold in KGs simultaneously. A clause with such form is a multiplicity pattern.
\end{definition}
DualE utilizes dual quaternions to represent each relation as a 3-dimensional rotation followed by a translation. It proves that the combination of rotations and translations can model multiple relations, since for any given rotation applied to the head entity $h$, there is always a corresponding translation to transform the rotated head entity to the tail entity $t$. 

In our proposed HousE, the relational Householder projections can be regarded as a special translation along the projection axes. 
Thus, HousE is similar to DualE in terms of multiplicity modeling capacity. 
What's more, as shown in Section \ref{translation plus}, our proposal can also easily integrate translations to achieve better performance. 
Geometrically, DualE can be viewed as a special case of HousE$^+$ with 3-dimensional rotations. 

\subsection{Transitivity.}
Rot-Pro~\cite{Rot-Pro} focuses on modeling the transitivity pattern, which is formally defined as:
\begin{definition}
A relation $r$ is transitive if for any instances $(e_1,r,e_2)$ and $(e_2,r,e_3)$ of relation $r$, $(e_1,r,e_3)$ is also an instance of $r$. A clause with such form is a transitivity pattern.
\end{definition}
Rot-Pro theoretically shows that the transitive relations can be modeled with a special orthogonal projections, which is designed to project the points onto the rotated axes. This kind of projections can be viewed as a 2-dimensional case of TransH's projections. 

HousE can be reduced to Rot-Pro if we set the rotation dimension to 2 and the projection scalars to 1. 
However, in our opinion, such projections may not be the optimal way to handle transitivity. As shown in~\cite{Rot-Pro}, Rot-Pro tends to project the entities under the transitive relation to a same point and the phase of relational rotation tends to be $2n\pi (n=0,1,2,\ldots)$. We can see that such solution is a subset of the solution of modeling symmetric relations, which means that the modeled transitive relations must be symmetric and the antisymmetric transitive relations are ignored. Therefore, how to comprehensively model the transitive relations is still a challenging problem, and we will take this as the future work.

\section{Datasets}
\label{datasets app}
\begin{table}[h]
\caption{Statistics of five standard benchmarks.}
\label{datasets}
\vspace{3mm}
\begin{center}
\begin{small}
\resizebox{0.55\columnwidth}{!}{\begin{tabular}{cccccc}
\toprule
Dataset & \#entity & \#relation & \#training & \#validation & \#test \\
\midrule
WN18    & 40,943 & 18 & 141,442 & 5,000 &  5,000 \\
FB15k    & 14,951 & 1,345 & 483,142 & 50,000 & 59,071 \\
WN18RR    & 40,943 & 11 & 86,835 & 3,034 & 3,134 \\
FB15k-237    & 14,541 & 237 & 272,115 & 17,535 & 20,466 \\
YAGO3-10    & 123,182 & 37 & 1,079,040 & 5,000 & 5,000 \\
\bottomrule
\end{tabular}}
\end{small}
\end{center}
\vskip -0.1in
\end{table}
Table \ref{datasets} summarizes the detailed statistics of five benchmark datasets:

WN18~\cite{TransE} is extracted from WordNet~\cite{miller1995wordnet}, a database featuring lexical relations between words.

FB15k~\cite{TransE} contains relation triples from Freebase~\cite{bollacker2008freebase}, a large-scale knowledge graph containing general knowledge facts. The main relation patterns in WN18 and FB15k are symmetry, antisymmetry and inversion. 

The WN18RR~\cite{dettmers2018convolutional} and FB15k-237~\cite{toutanova2015observed} datasets are subsets of WN18 and FB15k respectively with inverse relations removed. The key of link prediction on WN18RR and FB15k-237 boils down to model and infer the symmetry, antisymmetry and composition patterns. 

YAGO3-10 is a subset of YAGO3~\cite{mahdisoltani2014yago3}, containing 123,182 entities and 37 relations. Most of the triples in YAGO3-10 are descriptive attributes of people, such as citizenship, gender, profession and marital status. 


\section{Implementation details}
\label{Imp details}
Table \ref{params} shows the amount of parameters used in our models and several recent competitive baselines: RotatE, Rotate3D, QuatE and DualE. To ensure fair comparisons, we set the smaller embedding size $d$ to represent each entity and relation in HousE-r and HousE, so that the total number of parameters is similar to other baselines. Specifically, we fix the number of parameters $d \times k$ to represent a single entity as 1000, 1200, 800, 600, 1000 on WN18, FB15k, WN18RR, FB15k-237 and YAGO3-10, respectively. 
Hyperparameter $d$ denotes the embedding size and $k$ is the rotation dimension. 
The larger rotation dimension $k$ leads to the smaller embedding size $d$. 

From Table \ref{params}, one can see that our proposed models have similar numbers of parameters compared to the baselines. 
The only exception is QuatE on WN18RR and FB15k-237.  
We have tried to increase the number of parameters of QuatE by enlarging the embedding size $d$ on these two datasets, while carefully tuning hyperparameters simultaneously.
Unfortunately, the performance of QuatE drops with more free parameters.  
Thus, to ensure the fairness of performance comparison, we report the parameter numbers of QuatE with the best link prediction results.

\begin{table}[h]
\vspace{-2mm}
\caption{Number of free parameters comparison. The results of baselines are taken from the original papers. } 
\label{params}
\vskip 0.1in
\begin{center}
\begin{small}
\resizebox{0.6\columnwidth}{!}{
\begin{tabular}{ccccccc}
\toprule
Model  & RotatE & Rotate3D & QuatE  & DualE  & HousE-r  & HousE  \\
\cmidrule(r){1-5} \cmidrule(r){6-7}
WN18 & 40.95M & 122.90M & 49.15M & 65.53M & 40.88M & 41.03M \\
FB15k   & 31.25M & 50.23M & 26.08M & 26.08M & 24.40M & 27.63M \\
WN18RR     & 40.95M & 61.44M & 16.38M & 32.76M & 32.57M & 32.84M \\
FB15k-237    & 29.32M & 44.57M & 5.82M  & 11.64M & 12.13M & 13.36M \\
YAGO3-10    & 123.18M & - & -  & - & 122.91M  & 122.99M  \\
\bottomrule
\end{tabular}}
\end{small}
\end{center}
\end{table}

Table \ref{time} shows the convergence time required for the model training on five datasets. 
RotatE is the simplest rotation-based model with the linear time complexity, which is selected as the baseline.   
Compared to RotatE, our proposed HousE-r and HousE cost comparable or even less training time on these datasets by using the efficient computation in Equation (\ref{iter-eff}). 
Combined with the link prediction results in Table \ref{WN18-and-FB15k} and \ref{WN18RR-and-FB15k237}, one can see that our proposal is capable of improving model effectiveness without sacrificing the efficiency.
\begin{table}[h]
\vspace{-2mm}
\caption{Training time of RotatE and our proposal on five datasets.}
\label{time}
\vskip 0.1in
\begin{center}
\begin{small}
\begin{tabular}{cccccc}
\toprule
Model   & WN18RR & FB15k-237 & WN18 & FB15k & YAGO3-10 \\
\midrule
RotatE  & 4h         &  6h           &  4h        &  9h         & 10h         \\
HousE-r & 1.5h       &  3h           & 3h        & 8h         & 11h         \\
HousE   & 1.5h       & 5h           & 3h        &  9h         &  13h      \\
\bottomrule
\end{tabular}
\end{small}
\end{center}
\end{table}

We use Adam~\cite{kingma2014adam} as the optimizer and fine-tune the hyperparameters on the validation dataset. 
The hyperparameters are tuned by the random search~\cite{bergstra2012random}, including batch size $b$, self-adversarial sampling temperature $\alpha$, fixed margin $\gamma$, learning rate $lr$, rotation dimension $k$, number of modified Householder reflections $m$ for Householder projections, and regularization coefficient $\lambda$. The hyper-parameter search space is shown in Table \ref{Hyper_search}.
\begin{table}[h]
\vspace{-2mm}
\caption{Hyperparameter search space.}
\label{Hyper_search}
\vskip 0.1in
\begin{center}
\begin{small}
\begin{tabular}{ccc}
\toprule
Hyperparameter  & Search Space & Type  \\
\midrule
$b$  & $\{500, 800, 1000, 1500, 2000\}$ & Choice\\
$\alpha$  & $[0.5, 2.0]$ & Range\\
$\gamma$  & $\{5, 7, 9, 10, 11, 16, 20, 24, 28\}$ & Choice\\
$lr$  & $[0.0001, 0.003]$ & Range\\
$k$ & $\{2,4,8,12,16,20,25,30\}$ & Choice \\
$m$  & $\{1,2,3,4,6,8\}$ & Choice \\
$\lambda$  & $[0, 0.3]$ & Range\\
\bottomrule
\end{tabular}
\end{small}
\end{center}
\end{table}

\end{document}